\documentclass[11pt]{article}

\usepackage[final]{acl}

\usepackage{hyperref}
\usepackage{url}
\usepackage[ruled,vlined,linesnumbered]{algorithm2e}
\usepackage{amssymb,amsmath}

\usepackage{amsthm}
\newtheorem{lemma}{Lemma}
\newtheorem{theorem}{Theorem}

\usepackage{xcolor} % add this in preamble
\usepackage{colortbl}
\usepackage{booktabs}
\usepackage{graphicx}
\usepackage{makecell}
\usepackage{multirow}
\usepackage{hhline}
\usepackage{subcaption}

\newcommand{\colorg}{\cellcolor{gray!15}}

\usepackage{pgfplots}
\pgfplotsset{compat=1.15}
\usepgfplotslibrary{
  statistics,
  colorbrewer,
  groupplots,
}
\usetikzlibrary{
  patterns,
  shapes.geometric,
  decorations.text,
  matrix,
  fit,
  backgrounds,
  positioning,
}
\tikzset{
  fignode/.style={
    outer sep=0.25em,
  }
}
\tikzset{
  framedfignode/.style={
    outer sep=0.25em,
    inner sep=0.5em,
    rounded corners,
    draw,
  }
}
\colorlet{plotColorNeutral}{gray}
\definecolor{plotColor1}{HTML}{f61a1c}
\definecolor{plotColor2}{HTML}{377eb8}
\definecolor{plotColor3}{HTML}{4daf4a}
\definecolor{plotColor4}{HTML}{984ea3}
\definecolor{plotColor5}{HTML}{FFFFCB}
\definecolor{plotColor6}{HTML}{1e90ff}
\definecolor{plotColor7}{HTML}{064E40}
\colorlet{plotColorNeutral*}{plotColorNeutral!40}
\colorlet{plotColor1*}{plotColor1!60}
\colorlet{plotColor2*}{plotColor2!60}
\colorlet{plotColor3*}{plotColor3!60}
\colorlet{plotColor4*}{plotColor4!60}
\colorlet{plotColor5*}{plotColor5!60}
\colorlet{plotColor6*}{plotColor6!60}
\colorlet{plotColor7*}{plotColor7!60}
\pgfplotsset{
    colormap={greenred}{HTML=(4daf4a) HTML=(e41a1c)},
    colormap={redgreen}{HTML=(e41a1c) HTML=(4daf4a)}
}

\newcommand{\ours}{\textsc{SpecGuard}}
\newcommand{\oursgreedy}{\textsc{SpecGuard Greedy}}

\usepackage[T1]{fontenc}
\usepackage[utf8]{inputenc}

\usepackage{microtype}

\usepackage{inconsolata}

\title{
From Tokens to Steps: Verification-Aware Speculative Decoding for Efficient Multi-Step Reasoning
}
\author{Kiran Purohit\thanks{Work done during internship at
Adobe Research.} \\ IIT Kharagpur \\ \texttt{kiran.purohit} \\\texttt{@kgpian.iitkgp.ac.in} \And Ramasuri Narayanam \\ Adobe Research \\\texttt{rnarayanam} \\\texttt{@adobe.com}\And Soumyabrata Pal \\ Adobe Research \\ \texttt{soumyabratap} \\ \texttt{@adobe.com}}

\begin{document}
\maketitle
\begin{abstract}

Speculative decoding (SD) accelerates large language model inference by allowing a lightweight draft model to propose outputs that a stronger target model verifies. However, its token-centric nature allows erroneous steps to propagate. Prior approaches mitigate this using external reward models, but incur additional latency, computational overhead, and limit generalizability.
We propose \textbf{\ours{}}, a verification-aware speculative decoding framework that performs \emph{step-level} verification using only model-internal signals. At each step, \ours{} samples multiple draft candidates and selects the most consistent step, which is then validated using an ensemble of two lightweight model-internal signals: (i) an attention-based grounding score that measures attribution to the input and previously accepted steps, and (ii) a log-probability-based score that captures token-level confidence. These signals jointly determine whether a step is accepted or recomputed using the target, allocating compute selectively. Experiments across a range of reasoning benchmarks show that \ours{} improves accuracy by $3.6\%$ while reducing latency by $\sim$11\%, outperforming both SD and reward-guided SD.

\end{abstract}

\section{Introduction}
\label{sec:intro}

Large language models (LLMs) have demonstrated a remarkable ability to solve complex multi-step reasoning problems across domains such as mathematics and knowledge-intensive tasks \cite{brown2020language,team2024gemini,hurst2024gpt}. However, their practical deployment is constrained by high inference costs, which limit scalability and real-time applicability \cite{patterson2021carbon}. {\em Reducing inference overhead without sacrificing accuracy has therefore become a central research challenge }\cite{xuthink,lin2024awq}.  

Speculative decoding (SD) \cite{leviathan2023fast} has emerged as a promising solution to accelerate inference, where a lightweight draft model generates candidate tokens, and a stronger target model verifies them. By offloading much of the token generation process to the smaller draft model, SD achieves significant latency reductions compared to decoding with the target model alone. Despite these gains, SD remains inherently token-centric, leading to critical limitations in reasoning tasks. Its strict unbiasedness requirement often rejects semantically correct draft tokens that have low probability under the target model, resulting in wasted computation and reduced efficiency \cite{judge_decoding_2025,holtzmancurious}. This rigidity limits speedups and makes it less effective for multi-step tasks.

Recent extensions of SD attempt to address this limitation. For example, reward-guided speculative decoding (RSD) \cite{rsd_paper_2025} introduces external pre-trained reward models (PRMs) to verify the correctness of the draft output. Although effective in improving reliability, it incurs substantial drawbacks. First, reliance on external verifiers significantly increases latency and compute overhead.
Second, pre-trained reward models are often specialized to specific tasks, making them difficult to generalize across diverse reasoning tasks.

This naturally leads to the central question driving our work:
{\em How can we design a speculative decoding framework that maintains accuracy in multi-step reasoning tasks while remaining cost-efficient and scalable, without relying on external verifier models?}

% This tension leads to the central question of this line of work: {\em How can we design a speculative decoding framework that preserves reasoning accuracy in multistep tasks while remaining cost-efficient and scalable, without dependence on external verifier models?}

% Our motivation is twofold:  
% \begin{itemize}
%     \item \textbf{Accuracy preservation.} Prevent error propagation across reasoning steps by ensuring that only reliable intermediate outputs are accepted, thereby maintaining correctness over entire reasoning chains.  
%     \item \textbf{Efficiency.} Achieve this verification in a lightweight and cost-effective manner, avoiding the use of large external models and preserving the latency benefits of speculative decoding.  
% \end{itemize}

We present \textbf{\ours{}}, a mathematically grounded, verification-aware framework for adaptive inference-time compute allocation.
%an adaptive and verifier-guided speculative decoding framework that introduces model-internal verifiers. 
The key intuition behind \ours{} is:
% can be summarized along two key dimensions:

{\em - Accuracy preservation:} Mitigate error propagation by ensuring trusted intermediate outputs are accepted, safeguarding correctness throughout the reasoning chain.

{\em - Efficiency:} Enable lightweight, cost-effective verification without relying on large external verifiers, reducing latency.

\ours{} integrates two lightweight verifiers derived directly from the model itself: (i) {\em Attention-based grounding verification}, which checks whether the generated step is properly grounded in the input context or previously validated steps, and (ii) {\em Log-probability-based verification}, which ensures confidence at the token level. These complementary signals are combined into an {\em ensemble verifier} that adaptively decides whether to accept draft outputs or invoke the target model. Furthermore, we introduce a novel {\em self-consistency selector} that identifies the most semantically consistent step from multiple sampled draft candidates. 
% Together, these innovations allow \ours{} to preserve accuracy in multi-step reasoning while reducing latency and compute overhead.  
To summarize, our key contributions are:  

    (1) We propose \textit{\ours{}}, a novel framework that integrates model-internal verifiers with adaptive inference-time compute allocation, improving reliability without the need for external reward models.
    
    (2) We introduce a novel \textit{self-consistency selector} that identifies the representative step from multiple sampled draft candidates.
    
    (3) Extensive experiments on various reasoning benchmarks show that \ours{} improves accuracy by up to 3.6\% while reducing latency by $\sim$11\% compared to state-of-the-art methods, making it both effective and efficient for real-world deployment.
    % demonstrating that accuracy preservation and efficiency can be achieved simultaneously.
    
    % \item We demonstrate, through extensive experiments on various reasoning benchmarks, that InferSpec achieves higher accuracy and lower latency than existing speculative decoding methods, establishing it as both effective and efficient for real-world LLM deployment.  
    % \item InferSpec improves accuracy by up to 3.6\% while reducing latency by $\sim$11\% compared to state-of-the-art methods, demonstrating that accuracy preservation and efficiency can be achieved simultaneously.

% \begin{figure}[!t]
%     \centering
%     \includegraphics[width=0.8\linewidth]{InferSpec.png}
%     \caption{\ours{}: High level overview of our proposed model}
%     \label{fig:our_model}
% \end{figure}

\section{Related Work}

\label{sec:related_work}

\textbf{Speculative Decoding.} Speculative decoding accelerates inference by letting a lightweight draft model propose tokens that a larger target model verifies in parallel \cite{leviathan2023fast,li2024eagle,chen2024cascade,chen2023accelerating,zhang2024draft,stern2018blockwise,xia2024unlocking,sun2024triforce}. Variants include tree-based speculation \cite{chen2024sequoia,sun2023spectr,fu2024break,miao2024specinfer} to increase acceptance, self-speculative decoding that leverages parts of the base model \cite{zhang2024draft,elhoushi2024layerskip}, and CTC-based drafting \cite{wen2024speculative} to improve sequence quality. Methods like LayerSkip \cite{elhoushi2024layerskip} and Draft-on-the-Fly \cite{metel2024draft} further explore adaptive or early-exit strategies. SpecReason \cite{pan2025specreason} performs speculative reasoning with the target model as a critic that scores semantic utility. \ours{}, in contrast, combines multi-sample self-consistency with an ensemble verifier, enabling stronger filtering of plausible-but-ungrounded steps. RSD \cite{rsd_paper_2025} incorporates process reward models (PRMs) to guide speculative reasoning at the step level. \ours{} differs by keeping the standard draft-target pipeline but replacing external verifiers with lightweight, model-internal signals for step-level evaluation.

\textbf{Reward Models on Reasoning.}
Reward models are used to provide feedback for choosing the correct reasoning path \cite{zhou2025q,wang2024math,chen2024alphamath}. Outcome reward models (ORMs) \cite{dong2024rlhf,yu2024ovm} score final answers, while process reward models (PRMs) \cite{lightman2023let} assess intermediate steps. Advances in reward models have increased focus on scaling test-time compute \cite{snell2024scaling}, enabling strategies like Best-of-N sampling \cite{dong2023raft,cobbe2021training,brown2024large}, tree search \cite{yao2023tree,qimutual,chen2024alphamath}, and reward-guided inference such as RSD \cite{rsd_paper_2025} or SPECS \cite{cemri2025specs}. These improve reasoning quality but add latency and reliance on external verifiers. In contrast, \ours{} leverages an ensemble of internal confidence and grounding signals, avoiding external PRMs while improving multi-step reasoning accuracy.

\begin{figure*}[!t]
    \centering
    \includegraphics[width=0.76\textwidth]{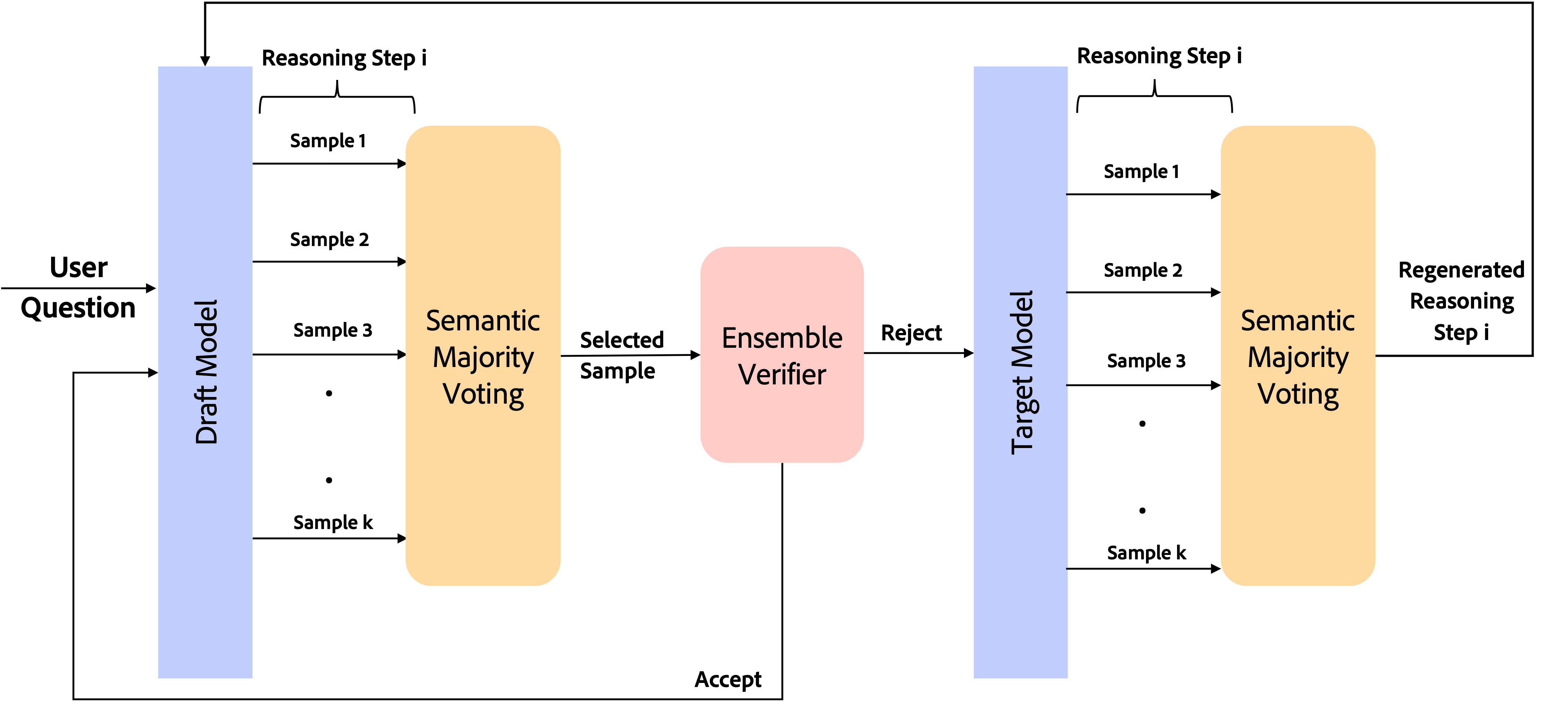}
    \caption{Architectural overview of the \ours{} framework}
    \label{fig:our_model}
\end{figure*}

\section{Our Proposed Approach}

% Large language models (LLMs) demonstrate strong reasoning capabilities, but their deployment is often constrained by high inference costs. Speculative decoding has emerged as a promising strategy to accelerate generation by allowing a lightweight draft model to propose candidate outputs, which are selectively accepted or rejected by a stronger target model. However, ensuring both efficiency and faithfulness in reasoning remains a central challenge, particularly in multi-step tasks such as mathematical problem solving, where early errors can propagate and compromise the final solution. To address this, we 

In this section, we present \ours{}, a novel speculative decoding framework leveraging inference-time compute. It employs an ensemble verifier combining attention-based grounding (Section \ref{sec:attention}) and probability-based signals (Section \ref{sec:logprob}). This formulation enables efficient reasoning without reliance on external verifiers while maintaining interpretability and robustness. 
We then describe how our approach combines the inference time compute with the ensemble-guided acceptance criteria (Section \ref{sec:ensemble_verifier})). Figure \ref{fig:our_model} outlines the high-level architectural overview of our proposed framework.

\subsection{Attention-Based Grounding Verification}

\label{sec:attention}

We introduce Attention-Based Grounding Verification (ABGV) as a mechanism to assess whether each output token (i.e. full reasoning step in our scenario) generated by a language model is sufficiently grounded in the input context or the previously generated steps. Unlike approaches that rely on external verifiers or auxiliary models, ABGV directly leverages the internal attention matrices of the model itself, enabling efficient and scalable verification. The key intuition is that a correctly grounded output should exhibit strong attention alignment with the most relevant input tokens or validated prior steps, thereby reflecting faithful attribution rather than spurious correlations.

Let an input prompt be denoted as $x$, and the language model generates an output sequence $y_i = (y_{i,1}, y_{i,2}, \dots, y_{i,T})$ at each step $i$. At each step, the model produces multilayer multihead attention matrices for each layer $l$ and head $h$:
$
A^{(l,h)} \in \mathbb{R}^{(t_{\text{input}} + t_{\text{output}}) \times (t_{\text{input}} + t_{\text{output}})}
$.

To compute cumulative attribution from input tokens to an output token, we use the well-known {\em attention rollout} mechanism, which recursively multiplies attention matrices across layers to show the total influence of each input token on the final output.
Formally, let $A^{(l)}$ denote the attention matrix averaged over the heads in layer $l$. The rollout matrix $R$ is computed as:
$
R = A^{(L)} A^{(L-1)} \cdots A^{(1)}
$.

For each output token $y_{i,t}$, the distribution over the input tokens is given by the row $R_{y_{i,t}}$ of the rollout matrix, normalized to sum to 1.
Let $\mathcal{I}$ denote the set of input tokens (including prior reasoning steps). 
The grounding score for token $y_{i,t}$ is defined as:
\[
G(y_{i,t}) = \sum_{j \in \mathcal{I}} R_{y_{i,t}}[j]
\]
Here, $R_{y_{i,t}}[j]$ denotes the attention weight of the token $y_{i,t}$ to the input token $j$. 
A higher grounding score indicates a stronger reliance on the input context. 
We adopt a stricter criterion by taking the minimum token grounding score across the reasoning step $y_i$:
$
G_{\text{min-step}} = \min_{t} G(y_{i,t})
$,
which ensures that every token in the generated reasoning step $y_i$ is sufficiently grounded, thereby preventing ungrounded tokens from being masked by averaging.

\noindent \textbf{Memory-Efficient Design:}
A naive implementation would require storing attention matrices from all layers, which could become memory-intensive for larger models and longer outputs. To ensure practical scalability, ABGV employs two lightweight design choices:

\textbf{- Layer subset:} we store attention matrices from the last $3$ layers, which we find sufficient for grounding quality (Figure \ref{fig:ablation} shows minimal loss in performance).

\textbf{- Head sparsification:} we discard entries below $0.01$ in attention heads, reducing memory footprint with negligible effect on grounding fidelity (see Figure \ref{fig:ablation}).

% Together, these optimizations keep ABGV’s memory overhead low even for longer outputs and large-scale models.

\subsection{Log Probability-Based Verification}

\label{sec:logprob}

We introduce log-probability-based verification (LPBV) as a complementary mechanism to assess the reliability of the full reasoning step generated by a language model. LPBV relies on the model’s own predictive confidence, as reflected in the conditional log probability of the tokens generated. The key intuition is that faithfully generated and reliable output should be assigned a relatively high log probability under the model’s next-token distribution, while unreliable tokens are often associated with predictions of low probability.
For each token $y_{i,t}$, of the reasoning step $y_i$, the model produces a conditional probability given the input $x$ and the prior steps:
$
p(y_{i,t} \mid x, y_{i,<t}),
$
from which we compute the logarithmic probability score:
\[
L(y_{i,t}) = \log p(y_{i,t} \mid x, y_{i,<t})
\]

At each step level, 
% these token-level scores can be aggregated in two ways. A natural aggregation is the average logarithmic probability, which reflects the overall confidence of the step:
% \[L_{\text{avg-step}} = \frac{1}{T} \sum_{t=1}^{T} L(y_{i,t})\]
% Alternatively, 
a stricter criterion is applied by taking the minimum log probability across tokens, ensuring that no token is assigned disproportionately low confidence:
$
L_{\text{min-step}} = \min_{t} L(y_{i,t})
$
.

%%%%%%%%%%%%%%%%%%%%%%%%%%%%%%%%%%%%%%%%%%%%
%%%%%%%%%%%%%%%%%% Ensemble Verifier
%%%%%%%%%%%%%%%%%%%%%%%%%%%%%%%%%%%%%%%%%%%%

\subsection{\ours{}: Verification-Aware Speculative Decoding}
\label{sec:ensemble_verifier}

We propose \ours{}, an ensemble verifier-guided speculative decoding framework that augments speculative decoding with principled verification at the step level. At each reasoning step, 
\ours{} evaluates draft outputs using two lightweight, model-internal signals: 
(i) \textit{Log Probability-Based Verification (LPBV)}, which enforces token-level confidence by 
measuring predictive likelihoods, and (ii) \textit{Attention-Based Grounding Verification (ABGV)}, 
which ensures that every generated token is properly attributed to the input or previously validated 
steps via attention rollout. These complementary criteria are combined into a unified ensemble score 
that carries \textbf{formal guarantees}: only steps that are simultaneously confident and grounded 
are accepted, while uncertain steps trigger recomputation with the target model. In doing so, 
\ours{} mitigates error cascades (common in SD), thus improving reasoning reliability while preserving efficiency.

\begin{algorithm}[!t]
\caption{\ours{}}
\label{algo:inferspec}
\footnotesize
\textbf{Input:} Prompt $x$, draft model $m$, target model $M$, log prob function $L(\cdot)$, grounding score function $G(\cdot)$, 
log prob range $[\ell_{\min}, \ell_{\max}]$, grounding range $[g_{\min}, g_{\max}]$, 
weight $\beta$, acceptance threshold $\tau$, EOS token $s$, max length $N$, samples per step $k$

\textbf{Output:} Response $y_{1:i}$

Initialize $y_{1:0} \gets ""$

\For{$i \gets 1$ \KwTo $N-1$}{
    Sample $k$ draft candidates $\{\hat{y}_i^{(1)}, \dots, \hat{y}_i^{(k)}\} \gets m(x, y_{1:i-1})$
    
    Select draft step $\hat{y}_i^{j*} \gets \text{Self-Consistency Selector}(\{\hat{y}_i^{(j)}\}_{j=1}^k)$
    
    Compute min log prob $\ell_i \gets L(\hat{y}_i^{j*})$
    
    Compute min grounding score $g_i \gets G(\hat{y}_i^{j*})$ 
    
    Normalize:
    $\tilde{\ell}_i \hspace{-0.12cm} = \hspace{-0.12cm} \dfrac{\ell_i - \ell_{\min}}{\ell_{\max} - \ell_{\min}}$, \hspace{-0.1cm}
    $\tilde{g}_i \hspace{-0.12cm} = \hspace{-0.12cm} \dfrac{g_i - g_{\min}}{g_{\max} - g_{\min}}$
    
    Compute ensemble verifier score:
    $r_i \gets \beta \cdot \tilde{\ell}_i + (1-\beta) \cdot \tilde{g}_i$
    
    \eIf{$r_i \geq \tau$}{
        Accept draft step $y_i \gets \hat{y}_i^{j*}$
    }{
        Sample $k$ target candidates $\{y_i^{(1)}, \dots, y_i^{(k)}\} \gets M(x, y_{1:i-1})$
        
        Select target step $y_i^{j*} \gets \text{Self-Consistency Selector}(\{y_i^{(j)}\}_{j=1}^k)$

        $y_i \gets y_i^{j*}$
    }
    
    \If{$s \in y_i$}{
        \textbf{break}
    }
}
\end{algorithm}

% We design \ours{}, an ensemble verifier-guided speculative decoding algorithm that extends the speculative decoding framework by incorporating an ensemble of verification signals to determine whether the reasoning steps generated by the draft should be accepted or regenerated by the target model. Specifically, \ours{} uses two complementary criteria: (i) \textit{Log Probability-Based Verification (LPBV)}, which captures the model’s predictive confidence, and (ii) \textit{Attention-Based Grounding Verification (ABGV)}, which evaluates whether generated tokens are sufficiently grounded in the input context and previously validated steps. By integrating these two distinct signals, \ours{} ensures that accepted steps are not only probable under the model distribution but also contextually grounded, thus improving reliability without sacrificing efficiency.  
At step $i$, \ours{} proceeds as follows:
\vspace{0.2cm}\\
% \begin{enumerate}
\textbf{A. Generate Draft Step:} The draft model $m$, samples $k$ candidate steps $\{\hat{y}_i^{(1)}, \dots, \hat{y}_i^{(k)}\}$ conditioned on the input prompt and previously accepted steps. To identify the most consistent candidate from these $k$ possibilities, we propose the {\em self-consistency selector} (see Section \ref{sec:self_consistency_selector}), which selects the step $\hat{y}_i^{j^*}$ that is maximally consistent with the other $k-1$ candidates.
\vspace{0.2cm}\\
\textbf{B. Compute Verification Scores:} For the selected step, the ensemble verifier computes both the logarithmic probability-based score $L(\hat{y}_i^{j^*})$ and the grounding score $G(\hat{y}_i^{j^*})$. Before aggregation, both scores are scaled to a comparable range using Min-Max normalization. 
\vspace{0.2cm}\\
\textbf{C. Apply Acceptance Criterion:} The ensemble verifier combines normalized scores through a weighted aggregation to determine acceptance. If the criterion is satisfied, $\hat{y}_i^{j^*}$ is accepted; otherwise, the target model $M$ is invoked to sample $k$ candidate steps $\{y_i^{(1)}, \dots, y_i^{(k)}\}$ to reduce stochastic variance and improve reliability. Since even the target model may occasionally produce inconsistent reasoning. 
% This is done to reduce stochastic variance and improve step-wise reliability. Since even the target model can occasionally produce locally inconsistent or semantically divergent reasoning, relying on a single sample may propagate errors.
{\em Self-consistency selector} is again applied to select the most consistent step $y_i^{j^*}$.
\vspace{0.2cm}\\
\textbf{D. Repeat Until Termination:} This process continues until the model generates an end-of-sequence (EOS) token or the sequence reaches the maximum length $N$. 
\vspace{0.2cm}\\ 
% \noindent
Algorithm \ref{algo:inferspec} outlines the key steps involved in the proposed approach \ours{}. Analysis of the computational complexity of \ours{} is provided in Appendix \ref{complexity_analysis}.

% \begin{algorithm}[H]
% \caption{Speculative Decoding with Ensemble Verifier}
% \KwIn{Prompt $x$, draft model $m$, target model $M$, 
% min log prob function $\ell(\cdot)$, grounding score function $g(\cdot)$, 
% log prob range $[\ell_{\min}, \ell_{\max}]$, grounding range $[g_{\min}, g_{\max}]$, 
% weight $\beta$, acceptance threshold $\tau$, EOS token $s$, max length $N$}
% \KwOut{Response $y_{1:i}$}

% Initialize $y_{1:0} \gets ""$

% \For{$i \gets 1$ \KwTo $N-1$}{
%     Generate draft step $\hat{y}_i \gets m(x, y_{1:i-1})$
    
%     Compute min log prob $\ell_i \gets \ell(\hat{y}_i)$
    
%     Compute min grounding score $g_i \gets g(\hat{y}_i)$
    
%     Normalize:
%     $\tilde{\ell}_i = \dfrac{\ell_i - \ell_{\min}}{\ell_{\max} - \ell_{\min}}$, 
%     $\tilde{g}_i = \dfrac{g_i - g_{\min}}{g_{\max} - g_{\min}}$
    
%     Compute combined verifier score:
%     $r_i \gets \beta \cdot \tilde{\ell}_i  + (1-\beta) \cdot \tilde{g}_i$
    
%     \eIf{$r_i \geq \tau$}{
%         Accept draft step $y_i \gets \hat{y}_i$
%     }{
%         Generate target step $y_i \gets M(x, y_{1:i-1})$
%     }
    
%     \If{$s \in y_i$}{
%         \textbf{break}
%     }
% }
% \end{algorithm}

% Please refer to Appendix \ref{complexity_analysis} for an analysis on the computational complexity of \ours{}.

\begin{algorithm}[!t]
\caption{Self-Consistency Selector}
\label{algo:self_consistency_selector}
\footnotesize
\KwIn{Set of $k$ candidates $\{y^{(1)}, \dots, y^{(k)}\}$, sentence transformer model $\mathcal{E}$}
\KwOut{Index $j^\ast$ of the selected candidate}

Compute embeddings for all candidates: 
$e^{(j)} \gets \mathcal{E}({y^{(j)}})$ for $j = 1 \dots k$

Normalize embeddings so that $\|e^{(j)}\|_2 = 1$

Compute pairwise similarity matrix:
$S_{ij} \gets \langle e^{(i)}, e^{(j)} \rangle$ for $i,j = 1 \dots k$

Apply row-wise softmax:
$\tilde{S}_{ij} = \dfrac{\exp(S_{ij})}{\sum_{l=1}^k \exp(S_{il})}$

Extract diagonal scores:
$d_j \gets \tilde{S}_{jj}$ for $j = 1 \dots k$

% Select candidate with minimum self-alignment:
$j^\ast \gets \arg \min_j d_j$ \tcp*[r]{Select most semantically consistent candidate}
\end{algorithm}

%%%%%%%%%%%%%%%%%%%%%%%%%%%%%%%%%%%%%%%%%%%%%%
%%%%%%%%%%%%%%% Self-Consistency Selector
%%%%%%%%%%%%%%%%%%%%%%%%%%%%%%%%%%%%%%%%%%%%%%
\subsection{Self-Consistency Selector to Identify Consistent Candidate}
\label{sec:self_consistency_selector}

To identify the most consistent reasoning step among a set of $k$ sampled candidates (either by draft or target), we propose the {\em self-consistency selector}, based on \cite{dcscore_icml_2025}.
% a semantic consistency-based selection criterion. 
The underlying intuition is that a consistent candidate should exhibit strong agreement with the other candidates, rather than being an outlier. Formally, each candidate $y^{(j)}$ is encoded in a normalized embedding $e^{(j)}$ using a pre-trained sentence transformer $\mathcal{E}$. While embeddings could, in principle, be obtained from the model’s final hidden layer, vLLM does not expose intermediate states, so we compute them externally. The (cosine) similarities are then calculated in pairs between the candidates, yielding a similarity matrix $S \in \mathbb{R}^{k \times k}$, which is further normalized row-wise using softmax to obtain $\tilde{S}$. For each candidate $y^{(j)}$, we calculate its self-alignment score $d_j = \tilde{S}_{jj}$, which measures the degree to which the candidate aligns with itself relative to the others. Candidates that are semantically consistent with the rest of the set distribute their similarity mass across multiple peers, producing a low $d_j$, while outliers or less consistent candidates concentrate the similarity primarily on themselves, resulting in a high $d_j$. Thus, candidates with lower $d_j$ are more representative of the set, and our approach selects the candidate with the minimum self-alignment score: $j^\ast \gets \arg \min_j d_j$. Algorithm \ref{algo:self_consistency_selector} describes our novel self-consistency selector.

\subsection{Formal Guarantees}

We now present formal guarantees for the proposed \ours{} algorithm. 

\begin{lemma}[Soundness Guarantee]
\label{lem:soundness}
Let $\mathcal{C}$ denote the set of correct reasoning steps, $\tilde{\ell}_i$ be the logarithmic probability signal, and $\tilde{g}_i$ be the attention-grounding signal. For any $\alpha \in [0,1]$, $\epsilon_\ell \in [0,1]$ and $\epsilon_g \in [0,1]$, assume that $\Pr\!\left[\tilde{\ell}_i \geq \alpha \;\middle|\; y_i \in \mathcal{C}\right] \geq 1 - \epsilon_\ell,
\quad
\Pr\!\left[\tilde{g}_i \geq \alpha \;\middle|\; y_i \in \mathcal{C}\right] \geq 1 - \epsilon_g.$
% \[
% \Pr\!\left[\tilde{\ell}_i \geq \alpha \;\middle|\; y_i \in \mathcal{C}\right] \geq 1 - \epsilon_\ell,
% \quad
% \Pr\!\left[\tilde{g}_i \geq \alpha \;\middle|\; y_i \in \mathcal{C}\right] \geq 1 - \epsilon_g.
% \]
Then, 
$
\Pr\!\left[ V(y_i) = \texttt{accept} \;\middle|\; y_i \in \mathcal{C} \right] \geq 1 - (\epsilon_\ell + \epsilon_g).
$
\end{lemma}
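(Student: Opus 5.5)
The plan is a union bound applied to the two failure events, combined with a monotonicity property of the ensemble score. First we fix what $V(y_i)=\texttt{accept}$ means. Following Algorithm~\ref{algo:inferspec}, it is the event $r_i \geq \tau$, where $r_i=\beta\tilde{\ell}_i+(1-\beta)\tilde{g}_i$. For the lemma to hold as stated, the threshold must be tied to $\alpha$. We therefore take $\tau \le \alpha$, i.e.\ the acceptance threshold does not exceed the level at which correct steps clear each individual signal. We will state this coupling $\tau\le\alpha$ explicitly as the operating regime in which the guarantee is claimed. Without it the statement fails: for $\tau>\alpha$, a step with both signals exactly equal to $\alpha$ would be rejected.

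The key deterministic step is an inclusion of events. Since $\beta\in[0,1]$, $r_i$ is a convex combination of $\tilde{\ell}_i$ and $\tilde{g}_i$. Hence $r_i\ge \min(\tilde{\ell}_i,\tilde{g}_i)$, and
\[
\{\tilde{\ell}_i\ge\alpha\}\cap\{\tilde{g}_i\ge\alpha\}\subseteq\{r_i\ge\alpha\}\subseteq\{r_i\ge\tau\}=\{V(y_i)=\texttt{accept}\}.
\]
This is the step where the choice of $\tau$ relative to $\alpha$ does the work. I expect it to be the main obstacle: not a technical one, but the point where an implicit modelling assumption has to be made explicit.

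Next we pass to probabilities conditioned on $y_i\in\mathcal{C}$. Taking complements of the inclusion above gives
\[
\Pr[V(y_i)\neq\texttt{accept}\mid y_i\in\mathcal{C}]\le \Pr[\tilde{\ell}_i<\alpha \mid y_i\in\mathcal{C}]+\Pr[\tilde{g}_i<\alpha\mid y_i\in\mathcal{C}].
\]
This follows from subadditivity of the conditional measure $\Pr[\cdot\mid y_i\in\mathcal{C}]$; we assume $\Pr[y_i\in\mathcal{C}]>0$ so that this measure is defined. By the two hypotheses of the lemma, the right-hand side is at most $\epsilon_\ell+\epsilon_g$.

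Taking complements once more yields $\Pr[V(y_i)=\texttt{accept}\mid y_i\in\mathcal{C}]\ge 1-(\epsilon_\ell+\epsilon_g)$. No independence between the two signals is needed, which is why the bound is additive rather than multiplicative. If $\epsilon_\ell+\epsilon_g\ge1$ the claim is vacuous, so nothing further needs checking in that case.
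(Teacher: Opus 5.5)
Your proof is the same as the paper's. The paper also uses $r_i \ge \min(\tilde{\ell}_i,\tilde{g}_i)$ together with a union bound over the failure events $\{\tilde{\ell}_i<\alpha\}$ and $\{\tilde{g}_i<\alpha\}$, taken under the measure conditioned on $y_i\in\mathcal{C}$. You also make the conditions $\tau\le\alpha$ and $\beta\in[0,1]$ explicit; the paper's proof relies on these silently but genuinely needs them, because, as your counterexample shows, the lemma as literally stated (``for any $\alpha$'') is false when $\alpha<\tau$.
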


\begin{proof}
Both $\tilde{\ell}_i$ and $\tilde{g}_i$ 
independently provide high probability acceptance for correct steps. 
Since the ensemble score satisfies 
$r_i \geq \min(\tilde{\ell}_i, \tilde{g}_i)$, 
the probability of rejection is bounded by the union of individual error events. 
Then, the total error probability is at most $\epsilon_\ell + \epsilon_g$ and thus the lemma follows.
\end{proof}

\begin{lemma}[Efficiency Guarantee]
\label{lem:efficiency}
Let $\pi_i = \Pr[V(y_i) = \texttt{accept}]$. 
Then the expected no. of target calls $C_T$, is
$
\mathbb{E}[C_T] = \sum_{i=1}^T (1 - \pi_i)
$. If $\pi_i \geq \pi_{\min} \;$ for all $i$, then
$
\mathbb{E}[C_T] \leq T \cdot (1 - \pi_{\min}).
$
\end{lemma}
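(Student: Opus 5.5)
We write the number of target calls as a sum of indicators and then apply linearity of expectation. In Algorithm \ref{algo:inferspec}, the target model $M$ is invoked at step $i$ exactly when the ensemble verifier rejects the selected draft step, i.e.\ when $r_i < \tau$. Let $Z_i = \mathbf{1}[V(y_i) \neq \texttt{accept}]$ be the indicator of rejection at step $i$. Then $C_T = \sum_{i=1}^T Z_i$, where $T$ is the number of reasoning steps. Here each rejection counts as one ``target call'', namely one invocation of the batch of $k$ target samples followed by the self-consistency selector. This convention should be stated explicitly.

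\textbf{Equality.} Taking expectations and using linearity, which needs no independence across steps even though $y_{1:i-1}$ affects step $i$, gives
\[
\mathbb{E}[C_T] = \sum_{i=1}^T \mathbb{E}[Z_i] = \sum_{i=1}^T \Pr[V(y_i)\neq\texttt{accept}] = \sum_{i=1}^T (1-\pi_i).
\]
Here $\pi_i$ is read as the marginal acceptance probability at step $i$, averaging over the randomness of all earlier steps.

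\textbf{Bound.} If $\pi_i \ge \pi_{\min}$ for every $i$, then each term satisfies $1-\pi_i \le 1-\pi_{\min}$. Summing the $T$ terms gives $\mathbb{E}[C_T] \le T(1-\pi_{\min})$.

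\textbf{Main obstacle.} The only delicate point is the horizon. The loop may stop early at an EOS token, so the number of steps is a random stopping time rather than a fixed $T$. We handle this by treating $T$ as a fixed upper bound on the number of steps, such as $N-1$, and setting $Z_i = 0$ for steps after termination. With that convention the marginal rejection probability at those steps is $0 \le 1-\pi_{\min}$, so both the equality (with $\pi_i$ defined to include the event that step $i$ is reached) and the bound continue to hold. Alternatively, if one insists on the random number of steps, the bound follows from Wald's identity applied to the conditional rejection probabilities.
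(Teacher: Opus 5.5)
Your proof is correct and takes the same route as the paper's: a target call happens exactly when the verifier rejects, so $\mathbb{E}[C_T]=\sum_i(1-\pi_i)$ follows from linearity of expectation over the rejection indicators, and the bound follows term by term from $\pi_i\ge\pi_{\min}$. Your handling of the random EOS horizon, padding with $Z_i=0$ (that is, counting unreached steps as accepted so that $1-\pi_i=\Pr[Z_i=1]$), is extra care that the paper's two-line proof leaves implicit.
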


\begin{proof}
At each step $i$, a target call is required if and only if $V(y_i) = \texttt{reject}$. 
Thus, the expectation is $\sum_i (1 - \pi_i)$. 
If $\pi_i \geq \pi_{\min}$, the sum is bounded by $T(1 - \pi_{\min})$. 
This formalizes that higher acceptance rates directly reduce expected target calls. 
\end{proof}

\begin{theorem}[Accuracy--Efficiency Trade-off]
\label{thm:tradeoff}
Suppose correct steps satisfy Lemma~\ref{lem:soundness} and the incorrect steps are rejected with probability at least $1 - \delta$. 
Then, for any sequence of length $T$,
$
\Pr\!\left[\text{all accepted steps are correct}\right] 
\geq (1 - \epsilon_\ell - \epsilon_g)^T \cdot (1 - \delta)^{C_T}.
$
\end{theorem}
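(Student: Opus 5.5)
We argue step by step over the $T$ reasoning steps, treating the verifier's behaviour at each step as an event and multiplying conditional probabilities. The theorem is a lower bound on a product of per-step success events, so we aim for a per-step guarantee conditioned on the history $y_{1:i-1}$, followed by the chain rule.

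\textbf{Per-step events.} At step $i$, let $E_i$ be the event that ``the step appended to the output is correct, or, if it is incorrect, it was rejected.'' We split into two cases.

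\emph{Case 1: the selected draft $\hat{y}_i^{j^*}$ is correct.} By Lemma~\ref{lem:soundness}, applied conditionally on the history, the draft is accepted with probability at least $1-\epsilon_\ell-\epsilon_g$. An accepted correct step keeps ``all accepted steps correct'' intact. We lower-bound this case by $1-\epsilon_\ell-\epsilon_g$; this is deliberately crude, since rejection here is also harmless for correctness of accepted steps, so the bound only loses.

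\emph{Case 2: the draft is incorrect.} By hypothesis it is rejected with probability at least $1-\delta$. Rejection triggers a target call, so such steps are exactly the ones counted by $C_T$.

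Under the natural reading of the statement, the $T$ steps each contribute a factor $(1-\epsilon_\ell-\epsilon_g)$, and the steps invoking the target contribute an additional factor $(1-\delta)$. The number of such steps is $C_T$, with expectation given by Lemma~\ref{lem:efficiency}.

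\textbf{Combining over steps.} By the chain rule,
\[
\Pr\Big[\textstyle\bigcap_i E_i\Big]=\prod_{i=1}^T \Pr\big[E_i \mid E_1,\dots,E_{i-1}\big].
\]
Each conditional factor is at least $(1-\epsilon_\ell-\epsilon_g)$ on correct-draft steps and at least $(1-\delta)$ on incorrect-draft steps. Since every factor lies in $[0,1]$, each term is bounded below by the product of both, which is at most the per-case bound. This yields
\[
(1-\epsilon_\ell-\epsilon_g)^{T}\,(1-\delta)^{C_T}.
\]
The exponent on $(1-\delta)$ is at most the number of rejection steps, which is bounded by $C_T$; enlarging the exponent only decreases the bound. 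The resulting inequality holds pathwise, conditionally on the realized $C_T$. If an unconditional statement is wanted, we would take expectations and apply Jensen's inequality to the convex map $c\mapsto(1-\delta)^c$, giving the bound with $\mathbb{E}[C_T]$ in the exponent, which Lemma~\ref{lem:efficiency} then controls.

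\textbf{Main obstacle.} The difficulty is rigor about conditioning. $C_T$ is random and determined by the same events, and the hypotheses of Lemma~\ref{lem:soundness} must hold conditionally on every history, not just marginally. We would state explicitly that the bounds $\epsilon_\ell$, $\epsilon_g$ and $\delta$ hold uniformly conditional on $y_{1:i-1}$, so that the chain rule applies. We would also interpret the theorem as a bound on the event $\{C_T=c\}$ for each $c$, or via Jensen as above, rather than as a deterministic inequality with a random exponent.
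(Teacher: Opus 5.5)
\textbf{Gap: the $(1-\delta)$ factors cannot be attached to the target calls.} Your route is the paper's route: one factor $(1-\epsilon_\ell-\epsilon_g)$ per step and one factor $(1-\delta)$ per target call, multiplied together. The paper's proof asserts this product without justification. Your chain-rule justification breaks at exactly that point.

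In $\Pr[E_i \mid E_1,\dots,E_{i-1}]$ the factor is a single number that averages over whether the step-$i$ draft turns out correct. So it cannot be bounded by $(1-\epsilon_\ell-\epsilon_g)$ ``on correct-draft steps'' and by $(1-\delta)$ ``on incorrect-draft steps'', because that classification is random and is decided inside step $i$. As you note yourself, a correct draft never violates $E_i$. The factor is therefore at least $1-\delta\Pr[\text{draft incorrect}\mid\text{history}]\ge 1-\delta$, which gives $(1-\delta)^T$ with $\epsilon_\ell,\epsilon_g$ playing no role.

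Your own bookkeeping (``bounded below by the product of both'') gives $(1-\epsilon_\ell-\epsilon_g)^T(1-\delta)^T$. Replacing the exponent $T$ by $C_T\le T$ makes the bound \emph{larger}, not smaller. Your further claim that the number of incorrect-draft steps is at most the number of rejection steps is also false away from the target event. An accepted incorrect draft is exactly the failure you are bounding, and it triggers no target call. Counting target calls therefore undercounts the risky steps.

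\textbf{The statement fails under every reading you propose.} This is not a fixable technicality. Let each draft be independently correct with probability $1/2$, let correct drafts always be accepted ($\epsilon_\ell=\epsilon_g=0$), and let incorrect drafts be rejected with probability exactly $1-\delta=1/2$. A step violates the event only if its draft is incorrect and accepted, which has probability $1/4$. Hence $\Pr[\text{all accepted steps are correct}]=(3/4)^T$.

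On $\{C_T=0\}$, which has probability $(3/4)^T>0$, the right-hand side equals $1$. Meanwhile $\Pr[\text{all accepted steps are correct}\mid C_T=0]=(1/2)^T/(3/4)^T=(2/3)^T<1$. So both the pathwise and the conditional readings fail. Since $\mathbb{E}[C_T]=T/4$, the expectation (Jensen) reading would require $(3/4)^T\ge 2^{-T/4}\approx 0.84^T$, which also fails.

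What can actually be proved is $\Pr[\text{all accepted steps are correct}]\ge(1-\delta)^T$, assuming the rejection guarantee holds conditionally on every history. Alternatively, one can prove a bound whose $(1-\delta)$-exponent counts incorrect drafts rather than target calls. You were right that the random exponent $C_T$ is the main obstacle; it is precisely where the statement itself, and the paper's proof, break down.
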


\begin{proof}
By Lemma~\ref{lem:soundness}, the probability of accepting only the correct steps
is at least $(1 - \epsilon_\ell - \epsilon_g)^T$. 
By assumption, incorrect steps are rejected with probability at least $1 - \delta$, 
and there are $C_T$ target calls. 
Thus, the lower bound of the joint probability is
$(1 - \epsilon_\ell - \epsilon_g)^T (1 - \delta)^{C_T}$. 
\end{proof}

These results show that \ours{} provides multiplicative accuracy guarantees while bounding the expected number of target calls.

\begin{table*}[!t]
    \footnotesize
    \small
    \centering
\caption{Accuracy on reasoning benchmarks.}
\label{tab:acc_reasoning}
\resizebox{\textwidth}{!}{
\begin{tabular}{lccccccc}
\toprule
\textbf{Method} & \textbf{Target Model} & \textbf{Draft Model}   &  \textbf{MATH500} & \textbf{GSM8K} & \makecell{\textbf{Gaokao} \\ \textbf{2023 En}} & \makecell{\textbf{Olympiad} \\ \textbf{Bench}} \\

\midrule

\multicolumn{7}{c}{\cellcolor{orange!15} \textbf{Math Model, Draft and Target: Qwen2.5-Math-Instruct}} \\
\midrule
Target Model   & 7B & - &  83.0 & 94.7 & 66.8 & 40.6   \\
Target-only Majority &  7B &  - &  84.9	&  95.2	& 68.8	&  41.0 \\
Draft-only Majority   & - & 1.5B  & 79.0 & 88.9 & 67.9 & 40.9   \\
Best-of-$N$   & - & 1.5B   & 82.2 & 93.3 & 67.4 & 40.7   \\
SD   & 7B & 1.5B  & 82.4 & 94.2 &  66.3 & 39.4  \\
RSD  & 7B  & 1.5B  & 82.4 & 94.4 & 68.5 & 39.6 \\
RSD Majority  & 7B  & 1.5B  & 78.0 & 88.7 & 64.9 & 38.7 \\

 SC + LPBV &  7B  &  1.5B	&  83.2	&  94.5	&  67.5	& 39.7 \\ 

\oursgreedy{} & 7B & 1.5B  & 83.6 & 95.6 & 68.8 & 40.7 \\
\rowcolor{yellow!10} \ours{} & 7B & 1.5B  & \textbf{85.4} & \textbf{95.8} & \textbf{69.4} & \textbf{41.2} \\

\midrule
\multicolumn{7}{c}{\cellcolor{orange!15} \textbf{General Model, Draft and Target: Qwen2.5-Instruct}} \\
\midrule
Target Model  & 7B & -    & 74.8 & 91.7 & 64.9 & 38.8   \\
Draft-only Majority   & - & 1.5B  & 66.4 & 82.1 & 56.9 & 28.7  \\
Best-of-$N$   & - & 1.5B  & 73.4 & 89.7 & 60.5 & 32.7   \\
SD   & 7B & 1.5B  & 74.8 & 91.6 & 63.1 & 37.1   \\
RSD   & 7B & 1.5B  & 71.4 & 90.1 & 60.5 & 37.6   \\
RSD Majority  & 7B & 1.5B  & 60.6 & 77.0 & 55.3 & 31.7 \\

\oursgreedy{} & 7B & 1.5B  & 74.9 & 92.0 & 65.5 & 37.8 \\
\rowcolor{yellow!10} \ours{} & 7B & 1.5B  & \textbf{77.0} & \textbf{93.0} & \textbf{66.0} & \textbf{40.3} \\

\midrule

\multicolumn{7}{c}{\cellcolor{orange!15} \textbf{General Model, Draft: Llama-3.2-Instruct and Target: Llama-3.1-Instruct}} \\
\midrule
Target Model   & 8B & -   & 48.2 & 83.9 & 40.8 & 14.5   \\
Draft-only Majority  & -  & 1B  & 38.0 & 60.2 & 32.2 & 9.5   \\
Best-of-$N$   & - & 1B   & 48.6 & 74.8 & 40.7 & 14.4   \\
SD   & 8B  & 1B & 47.0 & 83.4 & 40.1 & 16.1   \\
RSD   & 8B & 1B  & 50.0 & 83.9 & 41.8 & 15.7   \\
RSD Majority & 8B & 1B  & 36.6 & 61.9 & 30.6 & 12.3 \\

\oursgreedy{} & 8B & 1B  & 50.0 & 84.5 & 41.9 & 16.9 \\
\rowcolor{yellow!10} \ours{} & 8B & 1B  & \textbf{51.6} & \textbf{85.1} & \textbf{43.9} & \textbf{17.2} \\

\bottomrule
\end{tabular}}
\end{table*}

\section{Experiments}

\label{sec:results}

% We aim to address the following research questions:\\
% \noindent\textbf{RQ I.} Can ensemble-verifier-guided speculative decoding improve accuracy on challenging reasoning benchmarks compared to the state-of-the-art methods?\\
% \noindent\textbf{RQ II.} How does varying the number of samples per reasoning step impact accuracy?\\
% \noindent\textbf{RQ III.} Can we achieve lower runtime than RSD while maintaining comparable or better accuracy?

We address the following research questions:\\
\noindent\textbf{RQ I.} Does \ours{}
provide measurable accuracy improvements on multi-step reasoning benchmarks compared 
to state-of-the-art methods, while mitigating error cascades?\\
\noindent\textbf{RQ II.} How does the number of sampled candidates per reasoning step 
influence both the reliability and stability of \ours{} under the ensemble verification criterion?\\
\noindent\textbf{RQ III.} Can \ours{} reduce inference latency compared to RSD while preserving, or even enhancing, accuracy guarantees?

 \subsection{Experimental Setup}
  \label{sec:experiments_sec_1}

\textbf{Datasets and Metrics:} We conduct extensive experiments on datasets that require complex reasoning, including MATH500~\cite{hendrycks2measuring}, GSM8K~\cite{cobbe2021training}, GaoKao-2023-En~\cite{liao2024mario}, and OlympiadBench~\cite{he2024olympiadbench}. For evaluation, we adopt the official metrics, i.e., \textit{exact match} (EM). Detailed descriptions of the datasets can be found in Appendix~\ref{sec:datasets}.

\textbf{Models:} To evaluate \ours{}'s effectiveness, we consider both general-purpose and math-focused LLMs, namely Qwen-2.5-Math \cite{yang2024qwen2}, Qwen-2.5 \cite{qwen2025qwen25technicalreport}, and Llama-3 \cite{dubey2024llama}. For RSD, we adopt Skywork-o1-OpenPRM \cite{skywork} as the PRM. 
% The PRM produces a score between 0 and 1, with higher values indicating better quality.

\textbf{Baselines:} We evaluate \ours{} against four categories of baselines:
(1) \textit{Target model only}: the target model is used independently, which generally incurs a higher computational cost compared to \ours{}. (2) \textit{Draft model with or without PRM:} This group covers inference time compute techniques that aim to maximize the performance of the draft model. Specifically, we evaluate majority voting and Best-of-N (BoN) \cite{brown2024large,cobbe2021training}, where BoN selects the highest scoring response (last step) among N candidates using a PRM; beam search \cite{chen2024alphamath}, which employs a PRM to identify the optimal decoding trajectory; and we process Best-of-N, which samples N candidate steps and chooses the one with the highest reward. (3) \textit{Speculative decoding (SD):} We also include speculative decoding with 7 speculative tokens for accelerating inference \cite{leviathan2023fast}. (4) \textit{RSD:} \cite{rsd_paper_2025} leverages a PRM to score intermediate steps and adaptively determine when to call target model.

\textbf{Setting:} We perform all experiments on NVIDIA A100 GPUs with vLLM as the serving backend. We define a reasoning step as a generation terminated by \textbackslash n\textbackslash n. For generating multiple samples, we set $temperature = 0.7$, $top\_p = 0.8$, and $n = 16$. \ours{} refers to this multi-sample setting combined with our self-consistency selector, which chooses the most representative reasoning step. RSD Majority likewise employs this multi-sample setting, with each step scored by a PRM and the highest-scoring candidate selected.
In the greedy setting where $temperature = 0$, $top\_p = 1$, and $n = 1$, we refer our approach as \oursgreedy{}.
% For greedy decoding, we use temperature = 0, top\_p = 1, and n = 1. \oursgreedy{} refer to this setting. 
% In the case of RSD, each step is evaluated using a Process Reward Model (PRM), while for \ours{}, the step is evaluated using the proposed ensemble verifier. 
For both RSD and \ours{}, we set the threshold parameter to $\tau$ = 0.7 and we set $\beta$ = 0.3 for \ours{}. For details about hyperparameters refer to Appendix \ref{sec:hyperparameter}. Unless stated otherwise, all models used are Qwen-2.5-Math-Instruct. 
% For details about hyperparameters refer to Appendix \ref{sec:hyperparameter}.

\begin{table}[!t]
    \footnotesize
    \small

    \centering
\caption{Comparison with search-based methods on Qwen2.5-Math-Instruct. Beam Search and Process Best-of-$N$ use a 1.5B base model and a 1.5B PRM.}
\label{tab:acc_search}
\resizebox{\linewidth}{!}{
\begin{tabular}{lcccc}
\toprule
\textbf{Method} & \textbf{Setting} & \textbf{MATH500} & \textbf{GSM8K}  \\
\midrule
Draft Model (1.5B)      & -      & 73.8 & 85.0   \\
Process Best-of-$N$      & $N=8$  & 75.8 & 87.8   \\
Process Best-of-$N$      & $N=16$ & 76.0 & 87.9   \\
Beam Search              & bs=4   & 78.2 & 88.9  \\
Beam Search              & bs=8   & 78.2 & 88.4   \\
RSD (1.5B/7B/1.5B) & - & 82.4 & 94.4  \\
\oursgreedy{} & - & 83.6 & 95.6 \\
\rowcolor{blue!8} \ours{} & maj@16 & \textbf{85.4} & \textbf{95.8} \\
\bottomrule
\end{tabular}}
\end{table}

\begin{figure}[!t]
\small
\centering
\begin{subfigure}{.8\linewidth}
\begin{tikzpicture}
\begin{axis}[
    ylabel style = {font= \tiny},
    xlabel style = {font= \tiny},
    xlabel=No. of samples,
    xticklabel style={font=\tiny},
    yticklabel style={font=\tiny},
    ylabel=Accuracy on MATH500,
    height=5.0cm,
    width=6.2cm,
    legend style={
                    font= \tiny,
                },
    legend cell align={left},
    legend pos = {north west},
    xmin=0, xmax=17,
    ymin=75, ymax=90]
\addplot[smooth,mark=*,blue] plot coordinates {
    (1,83.6)
    (4,84.2)
    (8,85.0)
    (16,85.4)};
\addlegendentry{SpecGuard}
\addplot[smooth,mark=*,red] plot coordinates {
    (1,82.4)
    (4,81.2)
    (8,80.6)
    (16,78.0)};
\addlegendentry{RSD Majority}
\end{axis}
    \end{tikzpicture}
\begin{tikzpicture}
\edef\ours{"85.4","95.8","69.4","41.2"}
\edef\rsd{"78.0", "88.7", "64.9", "38.7"}

    \begin{axis}[
            ybar=5pt,
            width=1\textwidth,
            bar width=0.25,
            height=5cm,
            width=6.2cm,
            every axis plot/.append style={fill},
            grid=major,
            xtick={1, 2, 3, 4, 5},
            xticklabels={MATH500, GSM8K, GaoKao, Olympiad},
            xlabel={},
            ylabel style = {font=\tiny},
        yticklabel style = {font= \tiny,xshift=0.05ex},
        xticklabel style ={font=\tiny,yshift=0.5ex},
            ylabel={Time (in minutes)},
            enlarge x limits=0.15,
            ymin=0,
            ymax=90,
            legend style ={font=\tiny,yshift=0.5ex},
            area legend,
            nodes near coords style={font=\tiny,align=center,text width=2em},
            legend entries={SpecGuard, RSD Majority},
            legend cell align={left},
            legend pos=north west,
            legend style={/tikz/every even column/.append style={column sep=0.5cm}},
        ]
        \addplot+[
            ybar,
            plotColor2*,
            nodes near coords=\pgfmathsetmacro{\mystring}{{\ours}[\coordindex]}\textbf{\mystring},
            nodes near coords align={vertical},
            draw=black,
            postaction={
                    pattern=north west lines
                },
        ] plot coordinates {
                (1,41)
                (2,34)
                (3,35)
                (4,73)
            };

        \addplot+[
            ybar,
            plotColor1*,
            draw=black,
            nodes near coords=\pgfmathsetmacro{\mystring}{{\rsd}[\coordindex]}\textbf{\mystring},
    nodes near coords align={vertical},
            postaction={
                    pattern=north west lines
                },
        ] plot coordinates {
                (1,46)
                (2,41)
                (3,43)
                (4,80)
            };

    \end{axis}

\end{tikzpicture}
\end{subfigure}
\caption{(\textit{Top}) Varying number of samples. (\textit{Bottom}) Runtime comparison (y-axis) RSD Majority vs \ours{} with corresponding \textbf{accuracy indicated on top of bars}.}
\label{fig:varying_samples}
\end{figure}
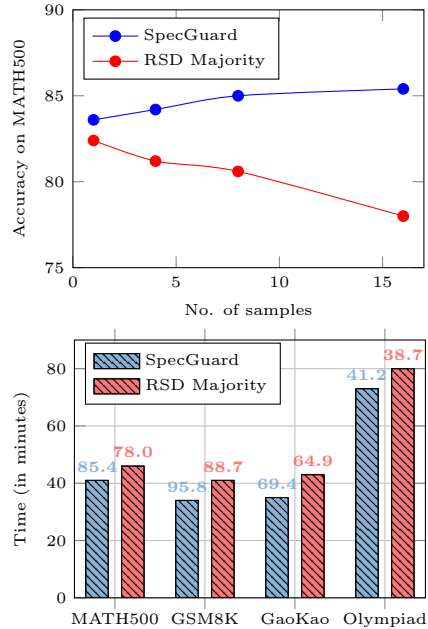

\subsection{Performance Comparison}
To address \textbf{RQI}, we evaluate \ours{} on a broad set of challenging reasoning benchmarks, in Table~\ref{tab:acc_reasoning}, and make the following observations:
(1) Inference-time compute strategies such as majority voting and Best-of-N, which rely on extensive draft sampling, typically underperform compared to the accuracy of a single target model.
% (1) Inference-time compute approaches such as majority voting and Best-of-N, which depend on extensive sampling with a draft model, generally fall short of the performance achieved by a single target model. 
This shows the critical role of larger models in reasoning tasks, as their capabilities cannot be readily matched by smaller models even with increased computation.
(2) While target-only majority voting may match \ours{} in accuracy, it incurs substantially higher computational cost as every reasoning step must be sampled multiple times from the target, contrary to our objective of reducing target calls. 
% In contrast, \ours{} minimizes target queries via speculative reasoning, achieving a superior efficiency-performance trade-off, which is the primary motivation of our work. 
(3) Although speculative decoding (SD) is theoretically unbiased, matching target model accuracy, it often performs worse in practice, primarily due to floating-point errors \cite{chen2023accelerating}. Furthermore, when the draft model surpasses the target model, the strict unbiased nature of SD can actually degrade performance relative to the draft. 
(4) Reward-guided speculative decoding (RSD) addresses this by using a process reward model (PRM) to evaluate draft reasoning steps, but reliance on an external verifier adds latency and computational overhead.
(5) \ours{} uses lightweight model-internal verifiers instead of PRMs to evaluate draft steps, and consistently outperforms both the target model and RSD across all benchmarks, demonstrating strong efficiency and performance. While LPBV captures confidence, it lacks grounding, allowing confident yet ungrounded steps to pass. In contrast, \ours{} achieves higher accuracy, showing ABGV is crucial for rejecting plausible but ungrounded steps. Appendix~\ref{sec:qualitative} presents \textbf{qualitative analysis} showing that PRM often assigns high scores to incorrect draft steps, underscoring the \textit{need for stronger verification that ensures both step-wise soundness and final-answer correctness.}
% In Appendix~\ref{sec:qualitative}, we show \textbf{qualitative analysis} of reasoning steps scored by PRM. Even though all draft-generated steps receive high acceptance scores from the PRM, the final answer is still incorrect, highlighting the \textit{need for stronger verification methods that ensure both step-wise soundness and final-answer correctness.} 
% While LPBV captures confidence, it lacks grounding, so confident but ungrounded steps frequently slip through. In contrast, \ours{} achieves higher accuracy, demonstrating that ABGV is essential for rejecting ungrounded steps that appear locally plausible. 

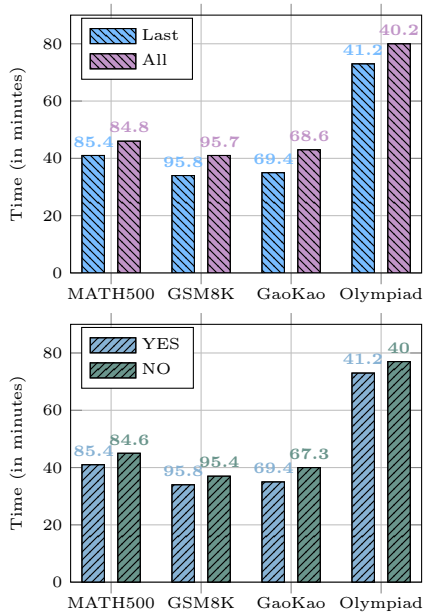
\begin{figure}[!t]
\small
\centering
\begin{subfigure}{0.8\linewidth}
\begin{tikzpicture}
\edef\last{"85.4","95.8","69.4","41.2"}
\edef\all{"84.8","95.7","68.6","40.2"}

    \begin{axis}[
            ybar=5pt,
            width=1\textwidth,
            bar width=0.25,
            height=5cm,
            width=6.2cm,
            every axis plot/.append style={fill},
            grid=major,
            xtick={1, 2, 3, 4, 5},
            xticklabels={MATH500, GSM8K, GaoKao, Olympiad},
            xlabel={},
            ylabel style = {font=\tiny},
        yticklabel style = {font= \tiny,xshift=0.05ex},
        xticklabel style ={font=\tiny,yshift=0.5ex},
            ylabel={Time (in minutes)},
            enlarge x limits=0.15,
            ymin=0,
            ymax=90,
            legend style ={font=\tiny,yshift=0.5ex},
            area legend,
            nodes near coords style={font=\tiny,align=center,text width=2em},
            legend entries={Last, All},
            legend cell align={left},
            legend pos=north west,
            legend style={/tikz/every even column/.append style={column sep=0.5cm}},
        ]
        \addplot+[
            ybar,
            plotColor6*,
            nodes near coords=\pgfmathsetmacro{\mystring}{{\last}[\coordindex]}\textbf{\mystring},
            nodes near coords align={vertical},
            draw=black,
            postaction={
                    pattern=north west lines
                },
        ] plot coordinates {
                (1,41)
                (2,34)
                (3,35)
                (4,73)
            };

        \addplot+[
            ybar,
            plotColor4*,
            draw=black,
            nodes near coords=\pgfmathsetmacro{\mystring}{{\all}[\coordindex]}\textbf{\mystring},
    nodes near coords align={vertical},
            postaction={
                    pattern=north west lines
                },
        ] plot coordinates {
                (1,46)
                (2,41)
                (3,43)
                (4,80)
            };

    \end{axis}

\end{tikzpicture}
\begin{tikzpicture}
\edef\withSparsity{"85.4","95.8","69.4","41.2"}
\edef\withoutSparsity{"84.6","95.4","67.3","40"}

    \begin{axis}[
            ybar=5pt,
            width=1\textwidth,
            bar width=0.25,
            height=5cm,
            width=6.2cm,
            every axis plot/.append style={fill},
            grid=major,
            xtick={1, 2, 3, 4, 5},
            xticklabels={MATH500, GSM8K, GaoKao, Olympiad},
            xlabel={},
            ylabel style = {font=\tiny},
        yticklabel style = {font= \tiny,xshift=0.05ex},
        xticklabel style ={font=\tiny,yshift=0.5ex},
            ylabel={Time (in minutes)},
            enlarge x limits=0.15,
            ymin=0,
            ymax=90,
            legend style ={font=\tiny,yshift=0.5ex},
            area legend,
            nodes near coords style={font=\tiny,align=center,text width=2em},
            legend entries={YES, NO},
            legend cell align={left},
            legend pos=north west,
            legend style={/tikz/every even column/.append style={column sep=0.5cm}},
        ]
        \addplot+[
            ybar,
            plotColor2*,
            nodes near coords=\pgfmathsetmacro{\mystring}{{\withSparsity}[\coordindex]}\textbf{\mystring},
            nodes near coords align={vertical},
            draw=black,
            postaction={
                    pattern=north east lines
                },
        ] plot coordinates {
                (1,41)
                (2,34)
                (3,35)
                (4,73)
            };

        \addplot+[
            ybar,
            plotColor7*,
            draw=black,
            nodes near coords=\pgfmathsetmacro{\mystring}{{\withoutSparsity}[\coordindex]}\textbf{\mystring},
    nodes near coords align={vertical},
            postaction={
                    pattern=north east lines
                },
        ] plot coordinates {
                (1,45)
                (2,37)
                (3,40)
                (4,77)
            };

    \end{axis}

\end{tikzpicture}
\end{subfigure}
\caption{Runtime comparison (y-axis) with corresponding \textbf{accuracy indicated on top of bars}. (\textit{Top}) changing layers. (\textit{Bottom}) Sparsity in attention heads by \ours{}.}
\label{fig:ablation}
\end{figure}

\subsection{Comparison with Search-Based Approaches}
We also compare \ours{} with beam search \cite{chen2024alphamath} and process Best-of-N, in Table~\ref{tab:acc_search}. Our method consistently outperforms both baselines. These findings reveal that when reasoning steps become complex, search-based techniques face limitations, as combinatorial growth of candidate solutions makes it difficult to reliably identify optimal paths, resulting in degraded performance. In contrast, \ours{} leverages the power of larger models to generate strong candidate solutions. In addition, the incorporation of an ensemble verifier provides step-level feedback, mitigating the challenges of difficult tasks.
{\em This highlights that moving beyond purely search-based strategies and augmenting larger models with lightweight feedback mechanisms can deliver both higher efficiency and stronger performance, when the search space is vast or the reasoning task is highly complex.}
%This demonstrates that moving beyond purely search-based strategies and instead combining larger models with lightweight feedback mechanisms can yield both greater efficiency and stronger performance in settings where the search space is large or reasoning complexity is high.

\subsection{Sample Size and Time Comparison}

To investigate \textbf{RQ II}, we evaluate \ours{} under varying sample sizes per reasoning step. 
In Figure~\ref{fig:varying_samples}, accuracy steadily improves as more diverse
candidates are explored, with gains saturating at higher counts. In contrast, RSD Majority exhibits
diminishing returns and even degrades performance with larger samples, due to accumulated
noise from PRM. This demonstrate that \ours{} takes advantage of additional candidate generations more effectively.

For \textbf{RQ III}, Figure~\ref{fig:varying_samples} compares the runtime of RSD Majority and \ours{}, 
with the accuracy indicated on top of the bars. \ours{} consistently achieves both higher accuracy and
lower latency. For example, on GSM8K, \ours{} achieves an accuracy of $95.8\%$ in $34$ minutes, compared to the accuracy of the RSD Majority $88.7\%$ in more than $41$ minutes. Together, these results confirm that our ensemble verifier-guided speculative decoding framework improves reasoning reliability while
delivering superior efficiency.

% To investigate \textbf{RQ II}, we evaluate \ours{} with different sample sizes per reasoning step. As shown in Figure~\ref{fig:varying_samples}(a), the accuracy of \ours{} steadily improves, as more diverse reasoning candidates are explored with gains eventually saturating at higher counts. In contrast, RSD shows diminishing returns and even degrades with larger samples, due to noise from the PRM. These results highlight that \ours{} effectively leverages extra candidate generations compared to RSD.

% To address \textbf{RQ III}, Figure~\ref{fig:varying_samples}(b) reports the runtime of RSD and \ours{}, with accuracy values indicated on top of the bars. \ours{} consistently achieves higher accuracy while also running faster. For instance, on GSM8K, our approach reaches 95.8\% accuracy in 34 minutes, compared to RSD’s 88.7\% accuracy in over 41 minutes. These results highlight that our ensemble-verifier-guided speculative decoding not only boosts accuracy but also offers superior efficiency compared to RSD.

%%%%%%%%%%%%%%%%%%%%%%%%%%%%%%%%%%%%
%%%%%% Ablation Studies
%%%%%%%%%%%%%%%%%%%%%%%%%%%%%%%%%%%%
\subsection{Ablation Studies}
We perform ablation studies to analyze key design choices in \ours{}, shown in Figure~\ref{fig:ablation}.\\
\textbf{Changing Layers:} Figure~\ref{fig:ablation} compares using attention from the last three
layers versus aggregating across all layers. Although the latter yields marginal gains on some benchmarks, it adds runtime overhead. Leveraging only the last three layers strikes a
better balance, achieving higher accuracy with lower latency. We show other variants in Appendix \ref{sec:changing_layers_full}.\\
\textbf{Sparsity in Attention Heads:} We discard entries below $0.01$ in attention heads. Figure~\ref{fig:ablation} shows that enforcing sparsity
improves both accuracy and runtime, sharpening the verifier’s focus on relevant attention patterns and enhancing efficiency without performance loss.

% We perform ablation studies to better understand the design choices in \ours{}, focusing on two aspects: (a) the layers used to extract model internal signals, and (b) the effect of introducing sparsity in attention heads. The results are summarized in Figure~\ref{fig:ablation}.\\
% \textbf{Changing Layers:} In Figure~\ref{fig:ablation}(a), we compare attention calculated from only the last three layers versus aggregating attention matrices across all layers. The results show that leveraging the attention of the last three layers is both more efficient and more accurate. Although using all layers slightly improves performance on some benchmarks (e.g., GSM8K), it consistently increases runtime overhead. This highlights a trade-off between marginal accuracy gains and computational efficiency, with the last three-layer strategy offering a more cost-effective solution.\\
% \textbf{Sparsity in Attention Heads:} Figure~\ref{fig:ablation}(b) examines whether enforcing sparsity in attention heads affects performance. The results indicate that the introduction of sparsity leads to both improved accuracy and reduced runtime across the benchmarks. This suggests that sparsity helps the verifier focus on more relevant attention patterns, enhancing efficiency without sacrificing performance.

\section{Conclusion and Future Work}
\label{conclusions}

In this work, we propose \ours{}, an adaptive speculative decoding that improves both efficiency and accuracy in multistep reasoning. By leveraging lightweight model-internal signals for verification, along with a self-consistency selector that identifies representative reasoning step across samples, \ours{} avoids dependence on external reward models and achieves higher accuracy with reduced latency compared to state-of-the-art methods. In future, we plan to extend \ours{} by incorporating additional internal signals such as entropy-based measures and uncertainty calibration to refine verifier reliability. 
\section{Limitations}

\label{sec:limitation}

Our work primarily focuses on improving inference-time efficiency and robustness for multi-step reasoning through verification-aware speculative decoding. While the proposed framework demonstrates consistent gains across a variety of reasoning benchmarks, we acknowledge several limitations related to the scope and evaluation of the current study.

Our experiments are conducted on a set of reasoning benchmarks that emphasize structured, step-by-step inference. Although these benchmarks cover a wide range of reasoning patterns, we do not explicitly evaluate performance on open-ended generation tasks or domains such as long-form dialogue and creative writing, where the notion of step-level verification may manifest differently. Extending the evaluation to such settings remains an interesting direction for future work.

Our analysis focuses on single-instance inference and does not explicitly address deployment aspects such as large-scale batching or hardware-specific optimizations. While our method reduces latency at the per-instance level, additional engineering considerations may be required to fully realize these benefits in production-scale systems. We leave these extensions to future work.

\section{Ethical Considerations}

Our approach relies on large language models for multi-step reasoning, it inherits known risks associated with LLMs, including hallucinated or factually incorrect outputs \cite{hallucination}. In real-world deployments, such incorrect reasoning traces or final answers could be mistakenly trusted by users, potentially leading to the dissemination of misleading or incorrect information \cite{zhang2023ethical,albrecht2022despite}. While our method aims to improve reasoning reliability by selecting the most consistent reasoning step, it does not eliminate the possibility of hallucinations, and the resulting systems should not be considered fully reliable without human oversight. The proposed approach is therefore best suited for research settings or as a supporting component in applications where appropriate validation mechanisms are in place.

Finally, our method does not rely on or introduce any private or personally identifiable data. All experiments are conducted using publicly available benchmarks. Although the underlying language models may have been pre-trained on large-scale corpora that could contain sensitive information, our method does not explicitly or implicitly solicit such content.

\bibliography{ref}

@inproceedings{hendrycks2measuring,
  title={Measuring Mathematical Problem Solving With the MATH Dataset},
  author={Hendrycks, Dan and Burns, Collin and Kadavath, Saurav and Arora, Akul and Basart, Steven and Tang, Eric and Song, Dawn and Steinhardt, Jacob},
  booktitle={Thirty-fifth Conference on Neural Information Processing Systems Datasets and Benchmarks Track (Round 2)},
  year={2021}
}

@inproceedings{rsd_paper_2025,
  title={Reward-Guided Speculative Decoding for Efficient LLM Reasoning}, 
  author={Liao, Baohao and Xu, Yuhui and Dong, Hanze and Li, Junnan and Monz, Christof and Savarese, Silvio and Sahoo, Doyen and Xiong, Caiming},
  year={2025},
  booktitle={Forty-second International Conference on Machine Learning (ICML)}
}

@inproceedings{dcscore_icml_2025,
  title={Measuring Diversity in Synthetic Datasets}, 
  author={Zhu, Y. and Zhang, H. and Wu, B. and Li, J. and Zheng, Z. and Zhao, P. and Chen, P. and Bian, Y.},
  year={2025},
  booktitle={Forty-Second International Conference on Machine Learning}
}

@inproceedings{judge_decoding_2025,
  title={Judge Decoding: Faster Speculative Sampling Requires Going Beyond Model Alignment},
  author={Bachmann, G. and Anagnostidis, S. and Pumarola, A. and Georgopoulos, M. and Sanakoyeu, A. and Du, Y. and Schönfeld, E. and Thabet, A.K. and Kohler, J.},
  booktitle={13th International Conference on Learning Representations (ICLR)},
  year={2025}
}

@inproceedings{liao2024mario,
  title={MARIO: MAth Reasoning with code Interpreter Output-A Reproducible Pipeline},
  author={Liao, Minpeng and Li, Chengxi and Luo, Wei and Jing, Wu and Fan, Kai},
  booktitle={Findings of the Association for Computational Linguistics ACL 2024},
  pages={905--924},
  year={2024}
}

@inproceedings{he2024olympiadbench,
  title={OlympiadBench: A Challenging Benchmark for Promoting AGI with Olympiad-Level Bilingual Multimodal Scientific Problems},
  author={He, Chaoqun and Luo, Renjie and Bai, Yuzhuo and Hu, Shengding and Thai, Zhen and Shen, Junhao and Hu, Jinyi and Han, Xu and Huang, Yujie and Zhang, Yuxiang and others},
  booktitle={Proceedings of the 62nd Annual Meeting of the Association for Computational Linguistics (Volume 1: Long Papers)},
  pages={3828--3850},
  year={2024}
}

@article{stern2018blockwise,
  title={Blockwise parallel decoding for deep autoregressive models},
  author={Stern, Mitchell and Shazeer, Noam and Uszkoreit, Jakob},
  journal={Advances in Neural Information Processing Systems},
  volume={31},
  year={2018}
}

@inproceedings{leviathan2023fast,
  title={Fast inference from transformers via speculative decoding},
  author={Leviathan, Yaniv and Kalman, Matan and Matias, Yossi},
  booktitle={International Conference on Machine Learning},
  pages={19274--19286},
  year={2023},
  organization={PMLR}
}

@inproceedings{xia2024unlocking,
  title={Unlocking Efficiency in Large Language Model Inference: A Comprehensive Survey of Speculative Decoding},
  author={Xia, Heming and Yang, Zhe and Dong, Qingxiu and Wang, Peiyi and Li, Yongqi and Ge, Tao and Liu, Tianyu and Li, Wenjie and Sui, Zhifang},
  booktitle={Findings of the Association for Computational Linguistics ACL 2024},
  pages={7655--7671},
  year={2024}
}

@article{chen2023accelerating,
  title={Accelerating large language model decoding with speculative sampling},
  author={Chen, Charlie and Borgeaud, Sebastian and Irving, Geoffrey and Lespiau, Jean-Baptiste and Sifre, Laurent and Jumper, John},
  journal={arXiv preprint arXiv:2302.01318},
  year={2023}
}

@inproceedings{zhang2024draft,
  title={Draft\& Verify: Lossless Large Language Model Acceleration via Self-Speculative Decoding},
  author={Zhang, Jun and Wang, Jue and Li, Huan and Shou, Lidan and Chen, Ke and Chen, Gang and Mehrotra, Sharad},
  booktitle={Proceedings of the 62nd Annual Meeting of the Association for Computational Linguistics (Volume 1: Long Papers)},
  pages={11263--11282},
  year={2024}
}

@article{sun2024triforce,
  title={TriForce: Lossless Acceleration of Long Sequence Generation with Hierarchical Speculative Decoding},
  author={Sun, Hanshi and Chen, Zhuoming and Yang, Xinyu and Tian, Yuandong and Chen, Beidi},
  journal={CoRR},
  year={2024}
}

@article{chen2024cascade,
  title={Cascade speculative drafting for even faster llm inference},
  author={Chen, Ziyi and Yang, Xiaocong and Lin, Jiacheng and Sun, Chenkai and Chang, Kevin and Huang, Jie},
  journal={Advances in Neural Information Processing Systems},
  volume={37},
  pages={86226--86242},
  year={2024}
}

@inproceedings{li2024eagle,
  title={EAGLE: Speculative Sampling Requires Rethinking Feature Uncertainty},
  author={Li, Yuhui and Wei, Fangyun and Zhang, Chao and Zhang, Hongyang},
  booktitle={International Conference on Machine Learning},
  pages={28935--28948},
  year={2024},
  organization={PMLR}
}

@inproceedings{miao2024specinfer,
  title={Specinfer: Accelerating large language model serving with tree-based speculative inference and verification},
  author={Miao, Xupeng and Oliaro, Gabriele and Zhang, Zhihao and Cheng, Xinhao and Wang, Zeyu and Zhang, Zhengxin and Wong, Rae Ying Yee and Zhu, Alan and Yang, Lijie and Shi, Xiaoxiang and others},
  booktitle={Proceedings of the 29th ACM International Conference on Architectural Support for Programming Languages and Operating Systems, Volume 3},
  pages={932--949},
  year={2024}
}

@inproceedings{fu2024break,
  title={Break the sequential dependency of LLM inference using LOOKAHEAD DECODING},
  author={Fu, Yichao and Bailis, Peter and Stoica, Ion and Zhang, Hao},
  booktitle={Proceedings of the 41st International Conference on Machine Learning},
  pages={14060--14079},
  year={2024}
}

@article{sun2023spectr,
  title={Spectr: Fast speculative decoding via optimal transport},
  author={Sun, Ziteng and Suresh, Ananda Theertha and Ro, Jae Hun and Beirami, Ahmad and Jain, Himanshu and Yu, Felix},
  journal={Advances in Neural Information Processing Systems},
  volume={36},
  pages={30222--30242},
  year={2023}
}

@article{chen2024sequoia,
  title={Sequoia: Scalable, robust, and hardware-aware speculative decoding},
  author={Chen, Zhuoming and May, Avner and Svirschevski, Ruslan and Huang, Yuhsun and Ryabinin, Max and Jia, Zhihao and Chen, Beidi},
  journal={arXiv preprint arXiv:2402.12374},
  year={2024}
}

@inproceedings{elhoushi2024layerskip,
  title={LayerSkip: Enabling Early Exit Inference and Self-Speculative Decoding},
  author={Elhoushi, Mostafa and Shrivastava, Akshat and Liskovich, Diana and Hosmer, Basil and Wasti, Bram and Lai, Liangzhen and Mahmoud, Anas and Acun, Bilge and Agarwal, Saurabh and Roman, Ahmed and others},
  booktitle={Proceedings of the 62nd Annual Meeting of the Association for Computational Linguistics (Volume 1: Long Papers)},
  pages={12622--12642},
  year={2024}
}

@article{wen2024speculative,
  title={Speculative decoding with CTC-based draft model for LLM inference acceleration},
  author={Wen, Zhuofan and Gui, Shangtong and Feng, Yang},
  journal={Advances in Neural Information Processing Systems},
  volume={37},
  pages={92082--92100},
  year={2024}
}

@inproceedings{metel2024draft,
  title={Draft on the Fly: Adaptive Self-Speculative Decoding using Cosine Similarity},
  author={Metel, Michael and Lu, Peng and Chen, Boxing and Rezagholizadeh, Mehdi and Kobyzev, Ivan},
  booktitle={Findings of the Association for Computational Linguistics: EMNLP 2024},
  pages={2267--2272},
  year={2024}
}

@inproceedings{wang2024math,
  title={Math-Shepherd: Verify and Reinforce LLMs Step-by-step without Human Annotations},
  author={Wang, Peiyi and Li, Lei and Shao, Zhihong and Xu, Runxin and Dai, Damai and Li, Yifei and Chen, Deli and Wu, Yu and Sui, Zhifang},
  booktitle={Proceedings of the 62nd Annual Meeting of the Association for Computational Linguistics (Volume 1: Long Papers)},
  pages={9426--9439},
  year={2024}
}

@article{zhou2025q,
  title={q\#: Provably optimal distributional rl for llm post-training},
  author={Zhou, Jin Peng and Wang, Kaiwen and Chang, Jonathan D and Gao, Zhaolin and Kallus, Nathan and Weinberger, Kilian Q and Brantley, Kiant{\'e} and Sun, Wen},
  journal={CoRR},
  year={2025}
}

@inproceedings{yu2024ovm,
  title={OVM, Outcome-supervised Value Models for Planning in Mathematical Reasoning},
  author={Yu, Fei and Gao, Anningzhe and Wang, Benyou},
  booktitle={Findings of the Association for Computational Linguistics: NAACL 2024},
  pages={858--875},
  year={2024}
}

@article{dong2024rlhf,
  title={RLHF Workflow: From Reward Modeling to Online RLHF A Comprehensive Practical Alignment Recipe of Iterative Preference Learning},
  author={Dong, Hanze and Xiong, Wei and Pang, Bo and Wang, Haoxiang and Zhao, Han and Zhou, Yingbo and Jiang, Nan and Sahoo, Doyen and Xiong, Caiming and Zhang, Tong},
  journal={Transactions on Machine Learning Research},
  volume={2024},
  year={2024},
  publisher={Transactions on Machine Learning Research}
}

@inproceedings{lightman2023let,
  title={Let's verify step by step},
  author={Lightman, Hunter and Kosaraju, Vineet and Burda, Yuri and Edwards, Harrison and Baker, Bowen and Lee, Teddy and Leike, Jan and Schulman, John and Sutskever, Ilya and Cobbe, Karl},
  booktitle={The Twelfth International Conference on Learning Representations},
  year={2023}
}

@article{snell2024scaling,
  title={Scaling llm test-time compute optimally can be more effective than scaling model parameters},
  author={Snell, Charlie and Lee, Jaehoon and Xu, Kelvin and Kumar, Aviral},
  journal={arXiv preprint arXiv:2408.03314},
  year={2024}
}

@article{brown2024large,
  title={Large language monkeys: Scaling inference compute with repeated sampling},
  author={Brown, Bradley and Juravsky, Jordan and Ehrlich, Ryan and Clark, Ronald and Le, Quoc V and R{\'e}, Christopher and Mirhoseini, Azalia},
  journal={arXiv preprint arXiv:2407.21787},
  year={2024}
}

@article{cobbe2021training,
  title={Training verifiers to solve math word problems},
  author={Cobbe, Karl and Kosaraju, Vineet and Bavarian, Mohammad and Chen, Mark and Jun, Heewoo and Kaiser, Lukasz and Plappert, Matthias and Tworek, Jerry and Hilton, Jacob and Nakano, Reiichiro and others},
  journal={arXiv preprint arXiv:2110.14168},
  year={2021}
}

@article{dong2023raft,
  title={RAFT: Reward rAnked FineTuning for Generative Foundation Model Alignment},
  author={Dong, Hanze and Xiong, Wei and Goyal, Deepanshu and Zhang, Yihan and Chow, Winnie and Pan, Rui and Diao, Shizhe and Zhang, Jipeng and Shum, Kashun and Zhang, Tong},
  journal={Transactions on Machine Learning Research},
  volume={2023},
  year={2023},
  publisher={Transactions on Machine Learning Research}
}

@article{chen2024alphamath,
  title={Alphamath almost zero: process supervision without process},
  author={Chen, Guoxin and Liao, Minpeng and Li, Chengxi and Fan, Kai},
  journal={Advances in Neural Information Processing Systems},
  volume={37},
  pages={27689--27724},
  year={2024}
}

@inproceedings{qimutual,
  title={Mutual Reasoning Makes Smaller LLMs Stronger Problem-Solver},
  author={Qi, Zhenting and Mingyuan, MA and Xu, Jiahang and Zhang, Li Lyna and Yang, Fan and Yang, Mao},
  booktitle={The Thirteenth International Conference on Learning Representations},
  year={2024}
}

@article{yao2023tree,
  title={Tree of thoughts: Deliberate problem solving with large language models},
  author={Yao, Shunyu and Yu, Dian and Zhao, Jeffrey and Shafran, Izhak and Griffiths, Tom and Cao, Yuan and Narasimhan, Karthik},
  journal={Advances in neural information processing systems},
  volume={36},
  pages={11809--11822},
  year={2023}
}

@inproceedings{cemri2025specs,
  title={SPECS: Faster Test-Time Scaling through Speculative Drafts},
  author={Cemri, Mert and Rajaraman, Nived and Tiwari, Rishabh and Liu, Xiaoxuan and Keutzer, Kurt and Stoica, Ion and Ramchandran, Kannan and Beirami, Ahmad and Sun, Ziteng},
  booktitle={ES-FoMo III: 3rd Workshop on Efficient Systems for Foundation Models},
  year={2025}
}

@article{brown2020language,
  title={Language models are few-shot learners},
  author={Brown, Tom and Mann, Benjamin and Ryder, Nick and Subbiah, Melanie and Kaplan, Jared D and Dhariwal, Prafulla and Neelakantan, Arvind and Shyam, Pranav and Sastry, Girish and Askell, Amanda and others},
  journal={Advances in neural information processing systems},
  volume={33},
  pages={1877--1901},
  year={2020}
}

@article{hurst2024gpt,
  title={Gpt-4o system card},
  author={Hurst, Aaron and Lerer, Adam and Goucher, Adam P and Perelman, Adam and Ramesh, Aditya and Clark, Aidan and Ostrow, AJ and Welihinda, Akila and Hayes, Alan and Radford, Alec and others},
  journal={arXiv preprint arXiv:2410.21276},
  year={2024}
}

@article{team2024gemini,
  title={Gemini 1.5: Unlocking multimodal understanding across millions of tokens of context},
  author={Team, Gemini and Georgiev, Petko and Lei, Ving Ian and Burnell, Ryan and Bai, Libin and Gulati, Anmol and Tanzer, Garrett and Vincent, Damien and Pan, Zhufeng and Wang, Shibo and others},
  journal={arXiv preprint arXiv:2403.05530},
  year={2024}
}

@article{lin2024awq,
  title={Awq: Activation-aware weight quantization for on-device llm compression and acceleration},
  author={Lin, Ji and Tang, Jiaming and Tang, Haotian and Yang, Shang and Chen, Wei-Ming and Wang, Wei-Chen and Xiao, Guangxuan and Dang, Xingyu and Gan, Chuang and Han, Song},
  journal={Proceedings of machine learning and systems},
  volume={6},
  pages={87--100},
  year={2024}
}

@inproceedings{xuthink,
  title={ThinK: Thinner Key Cache by Query-Driven Pruning},
  author={Xu, Yuhui and Jie, Zhanming and Dong, Hanze and Wang, Lei and Lu, Xudong and Zhou, Aojun and Saha, Amrita and Xiong, Caiming and Sahoo, Doyen},
  booktitle={The Thirteenth International Conference on Learning Representations},
  year={2024}
}

@article{patterson2021carbon,
  title={Carbon emissions and large neural network training},
  author={Patterson, David and Gonzalez, Joseph and Le, Quoc and Liang, Chen and Munguia, Lluis-Miquel and Rothchild, Daniel and So, David and Texier, Maud and Dean, Jeff},
  journal={arXiv preprint arXiv:2104.10350},
  year={2021}
}

@inproceedings{holtzmancurious,
  title={The Curious Case of Neural Text Degeneration},
  author={Holtzman, Ari and Buys, Jan and Du, Li and Forbes, Maxwell and Choi, Yejin},
  booktitle={International Conference on Learning Representations},
  year={2020}
}

@misc{qwen2025qwen25technicalreport,
      title={Qwen2.5 Technical Report}, 
      author={Qwen and : and An Yang and Baosong Yang and Beichen Zhang and Binyuan Hui and Bo Zheng and Bowen Yu and Chengyuan Li and Dayiheng Liu and Fei Huang and Haoran Wei and Huan Lin and Jian Yang et al.},
      year={2025},
      eprint={2412.15115},
      archivePrefix={arXiv},
      primaryClass={cs.CL},
      url={https://arxiv.org/abs/2412.15115}, 
}

@article{yang2024qwen2,
  title={Qwen2. 5-math technical report: Toward mathematical expert model via self-improvement},
  author={Yang, An and Zhang, Beichen and Hui, Binyuan and Gao, Bofei and Yu, Bowen and Li, Chengpeng and Liu, Dayiheng and Tu, Jianhong and Zhou, Jingren and Lin, Junyang and others},
  journal={arXiv preprint arXiv:2409.12122},
  year={2024}
}

@article{dubey2024llama,
  title={The llama 3 herd of models},
  author={Dubey, Abhimanyu and Jauhri, Abhinav and Pandey, Abhinav and Kadian, Abhishek and Al-Dahle, Ahmad and Letman, Aiesha and Mathur, Akhil and Schelten, Alan and Yang, Amy and Fan, Angela and others},
  journal={arXiv e-prints},
  pages={arXiv--2407},
  year={2024}
}

@misc{skywork,
      title={Skywork-o1 open series}, 
      author={o1 Team, S.},
      year={2024},
      url={https://huggingface.co/Skywork.}
}

@article{pan2025specreason,
  title={Specreason: Fast and accurate inference-time compute via speculative reasoning},
  author={Pan, Rui and Dai, Yinwei and Zhang, Zhihao and Oliaro, Gabriele and Jia, Zhihao and Netravali, Ravi},
  journal={arXiv preprint arXiv:2504.07891},
  year={2025}
}

@article{hallucination,
	doi = {10.1145/3571730},
  
	url = {https://doi.org/10.1145%2F3571730},
  
	year = 2023,
	month = {mar},
  
	publisher = {Association for Computing Machinery ({ACM})},
  
	volume = {55},
  
	number = {12},
  
	pages = {1--38},
  
	author = {Ziwei Ji and Nayeon Lee and Rita Frieske and Tiezheng Yu and Dan Su and Yan Xu and Etsuko Ishii and Ye Jin Bang and Andrea Madotto and Pascale Fung},
  
	title = {Survey of Hallucination in Natural Language Generation},
  
	journal = {{ACM} Computing Surveys}
}

@misc{zhang2023ethical,
      title={Ethical Considerations and Policy Implications for Large Language Models: Guiding Responsible Development and Deployment}, 
      author={Jianyi Zhang and Xu Ji and Zhangchi Zhao and Xiali Hei and Kim-Kwang Raymond Choo},
      year={2023},
      eprint={2308.02678},
      archivePrefix={arXiv},
      primaryClass={cs.CY}
}

@misc{albrecht2022despite,
      title={Despite "super-human" performance, current LLMs are unsuited for decisions about ethics and safety}, 
      author={Joshua Albrecht and Ellie Kitanidis and Abraham J. Fetterman},
      year={2022},
      eprint={2212.06295},
      archivePrefix={arXiv},
      primaryClass={cs.CL}
}

\appendix
\clearpage
\newpage

\section{Appendix}
\subsection{Datasets Description}
\label{sec:datasets}

An overview of the dataset statistics and examples are shown in Table \ref{tab:datasets_overview}.

\begin{table*}[b]
\caption{Overview of the Complex QA datasets used in this study.}
\label{tab:datasets_overview}
\resizebox{\textwidth}{!}{

\begin{tabular}{lcll}

\hline
\textbf{Dataset} & \textbf{\#Test} & \textbf{Example Question} & \textbf{Description} \\ \hline

\colorg       & \colorg     & \colorg \texttt{What is the smallest positive perfect cube}      & multi-step  \colorg   \\
\colorg \textbf{MATH500}~\cite{hendrycks2measuring}& \colorg  \colorg 500 & \colorg  \texttt{that can be written as the sum of three}  & arithmetic word  \colorg    \\
\colorg  & \colorg & \colorg \texttt{consecutive integers?} & problems \colorg  \\

 &       &  \texttt{The red car is 40\% cheaper than the blue car.}      &   multi-step  \\
\textbf{GSM8K}~\cite{cobbe2021training} & 1319 &  \texttt{The price of the blue car is \$100. How much} & arithmetic word  \\
&   & \texttt{do both cars cost?} & problems  \\

\colorg       & \colorg   & \colorg  \texttt{Suppose the universe set is U=\{0,1,2,4,6,8\}.}      & \colorg multi-step   \\
\colorg \textbf{GaoKao-2023-En}~\cite{liao2024mario} & \colorg 385 & \colorg  \texttt{Two of its subsets are M=\{0,4,6\}, N=\{0,1,6\}.} &\colorg arithmetic word     \\
\colorg &  \colorg  & \colorg \texttt{Find $M \cup \bar{N}$.} &\colorg problems \\

 &       & \texttt{A number is called Norwegian if it has three}      & multi-step  \\
\textbf{OlympiadBench}~\cite{he2024olympiadbench}  & 675 & \texttt{distinct positive divisors whose sum is equal}      & arithmetic word  \\
&  & \texttt{to 2022. Determine smallest Norwegian number.}      & problems
\\

\bottomrule
\end{tabular}
}

\end{table*}

\textbf{MATH500}: A benchmark subset curated from the MATH dataset, consisting of 500 competition-level mathematics problems spanning algebra, geometry, combinatorics, number theory, and probability. Each problem is accompanied by a detailed step-by-step solution, requiring multi-hop symbolic and logical reasoning. We use the full 500 problems as the evaluation set.

\textbf{GSM8K}: A dataset of linguistically diverse grade-school math word problems designed to test multi-step numerical reasoning. It comprises 8.5K questions, with a test set of 1,319 problems. Each question includes annotated solutions with intermediate steps, encouraging models to demonstrate faithful reasoning chains.

\textbf{Gaokao-2023-En}: Derived from the English-translated 2023 Gaokao (China’s national college entrance exam), this dataset contains high-school level math word problems with a strong emphasis on reasoning over algebra, functions, and applied mathematics. It poses particular challenges due to its formal problem style and complex solution trajectories. The evaluation set includes 385 problems.

\textbf{OlympiadBench}: A large-scale benchmark of problems drawn from global mathematics and science Olympiads, covering topics such as advanced algebra, geometry, physics, and logical reasoning. The problems are highly challenging, requiring creative multi-step reasoning far beyond routine computation. We evaluate on the test split of 675 questions.

\subsection{Additional Experiments}
\label{sec:prompts}

\subsubsection{Changing Layers}
\label{sec:changing_layers_full}
We perform ablation studies to analyze key design choices in \ours{}, particularly the selection of layers used to extract internal grounding signals. Figure~\ref{fig:change_layers_full} compares four settings: using attention from (1) all layers, (2) the last three layers, (3) the middle three layers, and (4) the first three layers. The results show that the first three layers perform noticeably worse than other variants, while the middle three layers achieve higher accuracy but still lag behind the deeper layers. Both the last three layers and all layers yield strong performance, but using all layers incurs higher runtime overhead. Overall, the last three layers provide the best trade-off, delivering strong accuracy with lower latency.

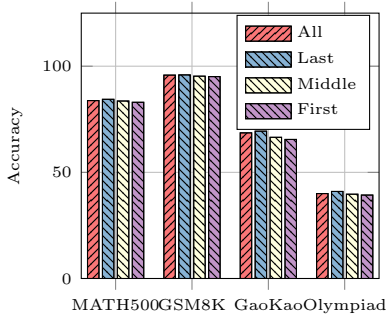
\begin{figure}[!t]
\centering
\begin{tikzpicture}

    \begin{axis}[
            ybar=1.3pt,
            width=1\textwidth,
            bar width=0.15,
            height=5.1cm,
            width=5.5cm,
            every axis plot/.append style={fill},
            grid=major,
            xtick={1, 2, 3, 4},
            xticklabels={MATH500, GSM8K, GaoKao, Olympiad},
            ylabel style = {font=\tiny},
        yticklabel style = {font=\tiny,xshift=0.05ex},
        xticklabel style ={font=\tiny,yshift=0.1ex},
            ylabel={Accuracy},
            enlarge x limits=0.15,
            ymin=0,
            ymax=125,
            legend style ={font=\tiny,yshift=0.5ex},
            area legend,
            nodes near coords style={font=\tiny,align=center,text width=2em},
            legend entries={All, Last, Middle, First},
            legend cell align={left},
            legend pos = {north east},
            legend style={/tikz/every even column/.append style={column sep=0.01cm}},
        ]
        \addplot+[
            ybar,
            plotColor1*,
            nodes near coords align={vertical},
            draw=black,
            postaction={
                    pattern=north east lines
                },
        ] plot coordinates { 
                (1,83.8)
                (2,95.8)
                (3,68.6)
                (4,40)
            };
        \addplot+[
            ybar,
            plotColor2*,
            draw=black,
    nodes near coords align={vertical},
            postaction={
                    pattern=north west lines
                },
        ] plot coordinates { 
                (1,84.4)
                (2,95.9)
                (3,69.4)
                (4,41)
            };
 
        \addplot+[
            ybar,
            plotColor5*,
            draw=black,
    nodes near coords align={vertical},
            postaction={
                    pattern=north west lines
                },
        ] plot coordinates { 
                (1,83.6)
                (2,95.3)
                (3,66.5)
                (4,39.7)
            };

                    \addplot+[
            ybar,
            plotColor4*,
            draw=black,
    nodes near coords align={vertical},
            postaction={
                    pattern=north west lines
                },
        ] plot coordinates { 
                (1,83)
                (2,95.1)
                (3,65.5)
                (4,39.3)
            };
            
    \end{axis}

\end{tikzpicture}
\caption{Changing Layers}
\label{fig:change_layers_full}
\end{figure}

\subsubsection{Tuning of $\beta$ and $\tau$}
\label{sec:hyperparameter}
We analyze the sensitivity of our approach to two hyperparameters: the step acceptance threshold $\tau$ for our ensemble verifier, and the weighting factor $\beta$, which balances the log-probability and attention grounding score when computing the ensemble score. As shown in Table~\ref{tab:beta}, accuracy remains stable across different values of $\beta$, with $\beta=0.3$ performing slightly better than higher values, suggesting that a moderate weighting strikes a good balance between model confidence and grounding. Table~\ref{tab:tau} reports results for varying $\tau$. We find that $\tau=0.7$ achieves the most consistent gains across datasets, while both lower ($\tau=0.6$) and higher values ($\tau=0.8, 0.9$) lead to small drops. Overall, our method is robust to hyperparameter choices, with $\beta=0.3$ and $\tau=0.7$ serving as effective defaults across tasks.

\subsubsection{Qualitative Analysis}
\label{sec:qualitative}
Table~\ref{tab:exemplar_qualitative_1}, \ref{tab:exemplar_qualitative_2} presents a qualitative example of reasoning steps scored by the PRM. Each intermediate step receives a high score, leading the verifier to accept the draft-generated reasoning without intervention. However, despite this consistent acceptance, the reasoning chain ultimately produces an incorrect final answer. This illustrates a key limitation of relying solely on PRM scores: while they may capture local plausibility of individual steps, they do not guarantee global correctness of the overall solution. Such cases highlight the need for more robust evaluation mechanisms that can account for consistency across steps as well as correctness of the final outcome.

\begin{table*}[!t]
\centering
\caption{Accuracy with different $\beta$s. Overall, $\beta$ = 0.3 works well for different tasks.}
\label{tab:beta}
\resizebox{\textwidth}{!}{
\begin{tabular}{lcccccccc}
\toprule
\textbf{Method} & \textbf{Target Model} & \textbf{Draft Model} & \textbf{Setting}   &  \textbf{MATH500} & \textbf{GSM8K} & \makecell{\textbf{Gaokao} \\ \textbf{2023 En}} & \makecell{\textbf{Olympiad} \\ \textbf{Bench}} \\

\midrule

\multicolumn{8}{c}{\textbf{Math Model, Draft and Target: Qwen2.5-Math-Instruct}} \\
\midrule
Ours Majority   & 7B & 1.5B  & $\beta = 0.3$ & 85.4 & 95.8 &  69.4 & 41.2  \\
Ours Majority  & 7B  & 1.5B & $\beta = 0.5$ & 85.0 & 95.7 & 68.5 & 40.4 \\
Ours Majority  & 7B  & 1.5B  & $\beta = 0.7$ & 84.4 & 95.6 & 65.5 & 40.2 \\

\bottomrule
\end{tabular}}
\end{table*}

\begin{table*}[!t]
\centering
\caption{Accuracy with different $\tau$s. Overall, $\tau$ = 0.7 works well for different tasks.}
\label{tab:tau}
\resizebox{\textwidth}{!}{
\begin{tabular}{lcccccccc}
\toprule
\textbf{Method} & \textbf{Target Model} & \textbf{Draft Model} & \textbf{Setting}   &  \textbf{MATH500} & \textbf{GSM8K} & \makecell{\textbf{Gaokao} \\ \textbf{2023 En}} & \makecell{\textbf{Olympiad} \\ \textbf{Bench}} \\

\midrule

\multicolumn{8}{c}{\textbf{Math Model, Draft and Target: Qwen2.5-Math-Instruct}} \\
\midrule
Ours Majority  & 7B  & 1.5B & $\tau = 0.6$ & 83.6 & 93.5 & 67.4 & 39.7 \\
Ours Majority  & 7B  & 1.5B  & $\tau = 0.7$ & 85.4 & 95.8 & 69.4 & 41.2 \\
Ours Majority  & 7B  & 1.5B  & $\tau = 0.8$ & 84.2 & 94.6 &  68.7 & 40.4 \\
Ours Majority  & 7B  & 1.5B  & $\tau = 0.9$ & 85.1 & 95.6 &  69.2 & 41.0 \\

\bottomrule
\end{tabular}}
\end{table*}

\begin{table*}[!ht]
\caption{Qualitative analysis of reasoning steps scored by \textbf{PRM}. \textit{Even though all draft-generated steps receive high acceptance scores from the PRM}, \textcolor{red}{the final answer is still incorrect}.}
\label{tab:exemplar_qualitative_1}
\begin{tabular}{p{0.75\linewidth}c}
\toprule
\multicolumn{2}{p{\linewidth}}{\textit{Question: A treasure hunter found a buried treasure chest filled with gems. There were 175 diamonds, 35 fewer rubies than diamonds, and twice the number of emeralds than the rubies. How many of the gems were there in the chest?}} \\
\toprule
    \textbf{Reasoning Steps} & \textbf{PRM Scores} \\
\midrule
% \small
Step 1: Identify the number of diamonds in the chest.
The treasure hunter found 175 diamonds.  & 0.962108
 \\\cline{1-2}
Step 2: Calculate the number of rubies.
There were 35 fewer rubies than diamonds, so the number of rubies is 175 - 35 = 140. & 0.962108
 \\\cline{1-2}
Step 3: Calculate the number of emeralds.
The number of emeralds is twice the number of rubies, so the number of emeralds is 2 * 140 = 280. & 0.962108
 \\\cline{1-2}
Step 4: Calculate total number of gems.
To find the total number of gems, add the number of diamonds, rubies, and emeralds: 175 + 140 + 280 = 535. & 0.961534
 \\\cline{1-2}
Step 5: The final answer is: \textcolor{red}{\textbackslash boxed\{535\}} & 0.962108
\\
\midrule
\end{tabular}

%\vspace{-0.25cm}
\end{table*}

\begin{table*}[!ht]
\caption{Qualitative analysis of reasoning steps scored by \textbf{PRM}. \textit{Even though all draft-generated steps receive high acceptance scores from the PRM}, \textcolor{red}{the final answer is still incorrect}.}
\label{tab:exemplar_qualitative_2}
\begin{tabular}{p{0.75\linewidth}c}
\toprule
\multicolumn{2}{p{\linewidth}}{\textit{Question: Two girls each got 1/6 of the 24 liters of water. Then a boy got 6 liters of water. How many liters of water were left?}} \\
\toprule
    \textbf{Reasoning Steps} & \textbf{PRM Scores} \\
\midrule
% \small
Step 1: Calculate the amount of water each girl received.
Each girl got 1/6 of 24 liters of water, which is (1/6) * 24 = 4 liters.  & 0.92192
 \\\cline{1-2}
Step 2: Calculate the total amount of water the girls received.
Since there are two girls, the total amount of water they received is 2 * 4 = 8 liters. & 0.92079
 \\\cline{1-2}
Step 3: Calculate the amount of water left after the boy got 6 liters.
Subtract the amount of water the boys received from the total amount of water the girls received: 8 - 6 = 2 liters. & 0.92079
 \\\cline{1-2}
Step 4: The final answer is: \textcolor{red}{\textbackslash boxed\{2\}} & 0.92079
\\
\midrule
\end{tabular}

%\vspace{-0.25cm}
\end{table*}

\begin{table*}[t]
\caption{Qualitative analysis of reasoning steps accepted by \textbf{\ours{}}, where \textit{consistent step verification leads to the} \textcolor{green}{correct final answer}.}
\label{tab:exemplar_qualitative_ours_1}
\begin{tabular}{p{0.99\linewidth}}
\toprule
\textit{Question: A treasure hunter found a buried treasure chest filled with gems. There were 175 diamonds, 35 fewer rubies than diamonds, and twice the number of emeralds than the rubies. How many of the gems were there in the chest?} \\
\toprule
    \textbf{Reasoning Steps}  \\
\midrule
Step 1: First, let's identify the number of diamonds, which is given as 175.\\\cline{1-1}
Step 2: The number of rubies is 35 fewer than the number of diamonds, so we need to subtract 35 from 175 to find the number of rubies: 175 - 35 = 140 rubies.\\\cline{1-1}
Step 3: The number of emeralds is twice the number of rubies, so we need to multiply the number of rubies by 2 to find the number of emeralds: 140 * 2 = 280 emeralds.\\\cline{1-1}
Step 4: To find the total number of gems in the chest, we need to add the number of diamonds, rubies, and emeralds: 175 + 140 + 280 = 595 gems.\\\cline{1-1}
Step 5: The final answer is: \textcolor{green}{\textbackslash boxed\{595\}}
\\
\midrule
\end{tabular}
%\vspace{-0.25cm}
\end{table*}

\begin{table*}[t]
\caption{Qualitative analysis of reasoning steps accepted by \textbf{\ours{}}, where \textit{consistent step verification leads to the} \textcolor{green}{correct final answer}.}
\label{tab:exemplar_qualitative_ours_2}
\begin{tabular}{p{0.99\linewidth}}
\toprule
\textit{Question: Two girls each got 1/6 of the 24 liters of water. Then a boy got 6 liters of water. How many liters of water were left?} \\
\toprule
\textbf{Reasoning Steps} \\
\midrule
Step 1: First, let's calculate the amount of water each girl got. Since they each got 1/6 of the 24 liters, we can calculate this as 24 * (1/6) = 4 liters per girl.\\\cline{1-1}

Step 2: Since there are two girls, the total amount of water they got is 4 * 2 = 8 liters.\\\cline{1-1}

Step 3: Now, let's calculate the amount of water left after the girls got their share. We subtract the amount they got from the total amount of water: 24 - 8 = 16 liters.\\\cline{1-1}

Step 4: Then, the boy got 6 liters of water. We subtract this from the remaining water: 16 - 6 = 10 liters.\\\cline{1-1}

Step 5: The final answer is: \textcolor{green}{\textbackslash boxed\{10\}} \\
\midrule
\end{tabular}
\end{table*}

%%%%%%%%%%%%%%%%%%%%%%%%%%%%%%%%%%%%%
%%%% Complexity Analysis
%%%%%%%%%%%%%%%%%%%%%%%%%%%%%%%%%%%%%

\subsection{Complexity Analysis - \ours{}}
\label{complexity_analysis}
We compare the computational complexity of
Speculative Decoding (SD), Reward-guided Speculative Decoding (RSD), and \ours{}. 

Let: 
\begin{itemize}
    \item $T$ = number of reasoning steps,
    \item $k$ = number of draft candidates sampled per step,
    \item $d$ = hidden dimension of embeddings,
    \item $H$ = number of attention heads,
    \item $L$ = number of layers used in ABGV,
    \item $C_{\text{draft}}, C_{\text{target}}$ = per-step cost of draft and target models.
\end{itemize}

\paragraph{Speculative Decoding (SD).} 

\text{Complexity} 
 \[= O\!\left(T \cdot (k \cdot C_{\text{draft}} + (1-\pi) \cdot k \cdot C_{\text{target}})\right),\]

where $\pi$ is the acceptance probability. 

\paragraph{Reward-guided Speculative Decoding (RSD).} 
In addition to SD cost, RSD requires a pretrained reward model (PRM) verifier:

\text{Complexity} 
= \[O\!\left(T \cdot (k \cdot C_{\text{draft}} + (1-\pi) \cdot k \cdot C_{\text{target}} + C_{\text{PRM}})\right).\]

\paragraph{\ours{}.} 
It includes draft sampling, self-consistency
selector, and ensemble verification.

\text{Complexity} 
= $O\!\left(T \cdot \big(k \cdot C_{\text{draft}} + k^2 d 
+ \tilde{L} \tilde{H} T^2 + (1-\pi) \cdot k \cdot C_{\text{target}}\big)\right),$

where $\tilde{L} \ll L$ and $\tilde{H} \ll H$ represent the reduced number of layers and heads
used in ABGV under sparsity/last-layer approximations. 

\noindent
It is not difficult to infer the following from the complexity analysis.
\begin{itemize}
    \item \ours{} avoids the external PRM cost $C_{\text{PRM}}$ in RSD, 
          reducing latency and compute. 
    \item With practical optimizations ($\tilde{L} \approx 3$, $\tilde{H} \ll H$), the ABGV overhead is negligible relative to the forward draft / target passes. 
    \item Empirically, \ours{} achieves up to $11\%$ lower runtime than RSD while improving accuracy by 1--3\% in benchmarks. 
\end{itemize}

\subsection{LLM Usage}
Large Language Models (LLMs) were used in this work solely as general-purpose assistive tools. Specifically, they were employed in two limited capacities: (i) to aid in polishing the writing for clarity and readability, and (ii) to assist in retrieval and discovery tasks, such as identifying related work. No part of the research design, algorithm development, theoretical analysis, or experimental implementation relied on LLMs. Their role was restricted to supportive tasks.

\end{document}